\newtheorem{theorem}{Theorem}[section]
\newtheorem{corollary}{Corollary}[theorem]
\theoremstyle{definition}
\newtheorem{definition}{Definition}[section]
\newtheorem{conjecture}{Conjecture}[section]
\title{GeLoRA: Geometric Adaptive Ranks For Efficient LoRA Fine-tuning}
\author{Abdessalam Ed-dib\footnotemark[2] \\
    \And
    Zhanibek Datbayev\thanks{Nace.AI} \\
    \And
    Amine Mohamed Aboussalah\thanks{NYU Tandon School of Engineering}\hspace{0.11cm}\footnotemark[1]  \\
}
\def\blfootnote{\xdef\@thefnmark{}\@footnotetext}
\begin{document}

\maketitle

\begin{abstract}
Fine-tuning large language models (LLMs) is computationally intensive because it requires updating all parameters. Low-Rank Adaptation (LoRA) improves efficiency by modifying only a subset of weights but introduces a trade-off between expressivity and computational cost: lower ranks reduce resources but limit expressiveness, while higher ranks enhance expressivity at increased cost. Despite recent advances in adaptive LoRA techniques, existing methods fail to provide a theoretical basis for optimizing the trade-off between model performance and efficiency. We propose Geometric Low-Rank Adaptation (GeLoRA), a novel framework that computes the intrinsic dimensionality of hidden state representations to adaptively select LoRA ranks. We demonstrate that the intrinsic dimension provides a lower bound for the optimal rank of LoRA matrices, allowing for a principled selection that balances efficiency and expressivity. GeLoRA dynamically adjusts the rank for each layer based on the intrinsic dimensionality of its input and output representations, recognizing that not all model parameters equally impact fine-tuning. Empirical validation on multiple tasks shows that GeLoRA consistently outperforms recent baselines within the same parameter budget. 
\end{abstract}

% keywords can be removed
\keywords{Large Language Models \and Low Rank Adaptation \and Information Geometry}

\section{Introduction}
\blfootnote{\\Abdessalam Ed-dib <\href{ae2842@nyu.edu}{ae2842@nyu.edu}>\\ Zhanibek Datbayev <\href{zhanibek@nace.ai}{zhanibek@nace.ai}>\\ Amine Mohamed Aboussalah <\href{ama288@nyu.edu}{ama288@nyu.edu}; \href{amine@nace.ai}{amine@nace.ai}>.}
LLMs are currently at the forefront of natural language processing tasks, yet achieving effective personalization requires additional fine-tuning. Pretraining an LLM on a diverse corpus enables it to learn general linguistic patterns and representations, which can be further refined through fine-tuning on task-specific datasets. However, fine-tuning the entire model is computationally expensive, both in terms of time and memory. To address this, a more efficient approach involves adjusting only a subset of the model's parameters, known as Parameter-Efficient Fine-Tuning (PEFT)  \citep{han2024parameterefficientfinetuninglargemodels}. PEFT methods include techniques such as adapter layers \citep{adapter}, which introduce new trainable layers into the model's backbone, and approaches like BitFit \citep{bitfit}, which modify a subset of the model’s original weights (e.g. bias weights). Low-rank adaptation methods, such as LoRA \citep{lora}, decompose update matrices into low-rank components and are particularly prominent in reducing computational costs, while maintaining comparable performance to full fine-tuning.

LoRA and its variants operate under the assumption that pre-trained language models possess a low ``intrinsic dimension'' \citep{dimensionalityexplainsfinetuning, li2018measuringintrinsicdimensionobjective}, suggesting that weight updates should similarly exhibit low rank. However, a key challenge with these techniques lies in determining the optimal rank values, which involves balancing expressivity and computational efficiency. Expressivity refers to the model's ability to capture complex patterns in the data, while computational efficiency pertains to the speed and resource requirements for fine-tuning. The trade-off is evident: lower ranks reduce expressivity but enhance memory efficiency and computational speed, whereas higher ranks increase expressivity at the cost of greater memory usage, longer computation times, and most likely more data to learn weights reliably. Typically, ranks are set uniformly across all layers, with practitioners relying on trial-and-error to achieve a balance between expressivity and efficiency. This process is time-consuming and may not always yield optimal results.

On the other hand, using random projection to reduce the dimensionality of the parameter space until achieving $90\%$ of the full fine-tuning performance may not be ideal, as it inherently limits the model's potential to achieve higher performance. Recent studies on the geometry of hidden representations \citep{geometryofhiddenrepr} reveal that these representations also exhibit low intrinsic dimensionality, reflecting the compression occurring at each layer of the model. This raises a natural question: 
\begin{center}
\begin{minipage}{0.8\textwidth}
    \textit{\textbf{Is there a connection between the manifold of data representations and the manifold of model parameters?}}
\end{minipage}
\end{center}

We theoretically investigate the relationship between the intrinsic dimensionality of data representations and the ranks of weight updates in language models, deriving a lower bound for the optimal rank based on the intrinsic dimensionalities of the input and output of each transformer block. Building on this foundation, we propose a novel approach, \textit{\textbf{Geometric Low-Rank Adaptation (GeLoRA)}}, to address the trade-off between expressivity and computational efficiency by exploiting the geometric properties of the model's hidden representations. GeLoRA leverages intrinsic dimensionalities to provide a more principled mechanism for adjusting ranks, thereby achieving an optimal balance between model expressivity and computational constraints. Our method dynamically adjusts the ranks for low-rank adaptation by considering both the compression occurring at each transformer block and the specific characteristics of the model and dataset, offering a more precise and theoretically motivated balance between performance and resource efficiency.

Determining the ground truth intrinsic dimension of each hidden state is impractical; however, various techniques can provide reliable estimates. Among these, we will adopt the Two Nearest Neighbors (TwoNN) method \citep{TwoNN}, which has proven to be an effective estimator. It is robust to variations in curvature and density within the data and has been widely used to analyze representations in deep neural networks in previous studies \citep{ansuini2019intrinsicdimensiondatarepresentations,doimo2020hierarchicalnucleationdeepneural,geometryofhiddenrepr,cheng-etal-2023-bridging,kvinge2023exploringrepresentationmanifoldsstable,basile2024investigatingadversarialvulnerabilityimplicit}.

\textbf{Contributions.} The contributions of our work are as follows:

\begin{itemize}
\item \textbf{Theoretical Framework for LoRA Effectiveness:} We establish a theoretical framework that explains the effectiveness of LoRA. Specifically, we derive a theoretical lower bound that connects the intrinsic dimensionalities of the data representation manifolds at the inputs and outputs of transformer blocks with the ranks of their constituent layers.

\item \textbf{Introduction of the GeLoRA Approach:} Building upon the derived lower bound, we introduce the GeLoRA approach, which dynamically adjusts the LoRA ranks across model weights to better align with the intrinsic dimensionalities of data representations.

\item \textbf{Empirical Validation of GeLoRA:} Through extensive experiments and analyses, we validate the practical performance and efficiency of the GeLoRA framework. Our results demonstrate that GeLoRA outperforms existing baselines while maintaining the same parameter budget.
\end{itemize}

\section{Related Work}
LLMs have achieved state-of-the-art performance in a wide range of natural language processing (NLP) tasks across diverse domains. Models such as GPT \citep{brown2020languagemodelsfewshotlearners} and BERT \citep{devlin2019bertpretrainingdeepbidirectional}  have demonstrated exceptional proficiency in tasks including language modeling, sentiment analysis, machine translation, and question answering, which showcases their versatility in natural language understanding and generation.

However, developing a more personalized model requires additional fine-tuning, which must be handled efficiently due to the substantial computational costs involved. This is where PEFT \citep{han2024parameterefficientfinetuninglargemodels} comes into play. It aims to balance the fine-tuning performance with the need to reduce computational overhead by selectively adjusting a small subset of the model’s parameters, thereby minimizing resource consumption, as compared to the more resource-intensive process of full fine-tuning.

Within this framework, different lines of research in model fine-tuning explore various approaches to optimizing efficiency. One such approach focuses on parameter tuning techniques, where only a subset of model parameters is trained while others remain fixed. An example is BitFit \citep{bitfit}, which exclusively adjusts the bias terms and the task-specific head within the model, leaving the remaining parameters unchanged. Another research direction involves the use of adapter layers by introducing small trainable layers, known as ``adapters'' \citep{adapter}, into the model, which enable adaptation to new tasks without altering the model's original weights. Moreover, context-based fine-tuning methods \citep{petrov2024promptingprefixtuningworktheory} are used to influence model outputs through input representation modification. Prefix tuning \citep{li2021prefixtuningoptimizingcontinuousprompts}, for instance, appends task-specific parameters to the input's embedding, guiding the model’s responses without altering its core parameters. Finally, LoRA \citep{lora, qlora, loraplus} represents a significant line of research that involves decomposing update matrices into the product of two low-rank matrices to reduce the number of trainable parameters, while maintaining comparable performance to full fine-tuning. Despite its advantages, LoRA faces challenges in determining the appropriate rank for the low-rank matrices. Typically, the rank is set uniformly across layers through a trial-and-error process, which is often suboptimal.

More recently, several LoRA variants have been developed to address the issue of setting uniform rank values by dynamically adjusting the rank for each layer. These variants compute importance scores or prune unnecessary ranks based on budget constraints, thereby optimizing rank allocation. Notable examples include AdaLoRA \citep{adalora}, SaLoRA \citep{salora}, SoRA \citep{sora}, and ALoRA \citep{alora}, each offering strategies to improve fine-tuning efficiency. AdaLoRA dynamically allocates the parameter budget across weight matrices during fine-tuning using singular value decomposition (SVD). It adjusts the rank of matrices by assigning higher ranks to critical singular values and pruning less important ones, resulting in a sparse selection of ranks. However, its heuristic criterion for sparsity selection lacks strong theoretical justification. Additionally, the computational complexity is increased due to operations like computing moving averages for importance scores and handling gradients from orthogonality regularization during training. On the other hand, SaLoRA dynamically learns the intrinsic rank of each incremental matrix using a binary gating mechanism and a differentiable relaxation method, which selectively removes non-critical components. While this improves efficiency, removing these components may introduce instability during training. To mitigate this, orthogonality regularization is applied to the factor matrices, improving training stability and generalization. However, the optimization process, which involves Lagrangian relaxation and orthogonal regularization, increases the computational overhead. SoRA also adjusts the intrinsic rank dynamically during training by employing a sparse gating unit, which is learned through the minimization of the $l_0$ norm via the proximal gradient method. Despite its promise, the sparsifying process lacks a strong theoretical foundation and may struggle to generalize to new domains effectively. Lastly, ALoRA enables dynamic rank adjustment during the adaptation process through two key steps: first, estimating the importance scores of each LoRA rank, and then pruning less important or negatively impactful ranks while reallocating resources to critical transformer modules that require higher ranks. However, the computational cost of performing adaptive budget LoRA (AB-LoRA) can be high, which may hinder its practicality in certain settings.

\section{GeLoRA: Geometric Low Rank Adaptation}
\subsection{Intuition}
Consider a linear map $f: x \mapsto Wx$, where the matrix $W$ has low rank $r$. The low rank of $W$ implies that $f$ compresses the semantic information of $x$ into a lower-dimensional space, such that $\dim \Im m f = r$. While the functions approximated by transformer blocks are far more complex than a linear map, we will later show that intrinsic dimension profiles can provide valuable insight for selecting appropriate ranks for each layer of a language model. Specifically, they offer a lower bound on the number of parameters required to effectively encode information. To rigorously examine how the rank of hidden states correlates with the number of parameters needed for effective fine-tuning in a transformer block, we present a formal theoretical framework in the next section.
\subsection{Theoretical Formulation}
For clarity and consistency, we maintain the notation used in the original low-rank adaptation paper \citep{lora}. %Indeed, our methodology is also agnostic to the training objective. Therefore,
Without loss of generality, we will focus on the language modeling problem, where the goal is to maximize conditional probabilities given a task-specific prompt. Each downstream task can be represented by a dataset comprising context-target pairs $\mathcal{Z} = \{(x_i, y_i)\}$, where both $x_i$ and $y_i$ are sequences of tokens. The primary objective is to accurately predict  $y_i$ given $x_i$. For example, in a summarization task, $x_i$ represents the original content and $y_i$ its summary. Mathematically, this can be modeled as follows:
\begin{equation*}
    \max_{\phi \in \Phi} \sum_{(x, y)\in \mathcal{Z}} \sum_{t=1}^{|y|} \log(\mathbb{P}_\phi(y_t \mid x, y_{<t}))
\end{equation*}
Here, $\Phi$ denotes the parameter set of the model, and $\mathbb{P}_\Phi(\cdot \mid \cdot)$ represents the conditional probability describing the relationship between context and target pairs. This probability distribution can be understood as a point on a neuromanifold $\mathcal{M} = \{\text{NN}_\phi \mid \phi \in \Phi\}$.

The geometry of this manifold is characterized by the Fisher Information Matrix (FIM) \citep{Fisher1922-bo} with respect to $\phi$, which is given by:

\begin{equation*}
    \mathcal{I}(\phi) = \mathbb{E}_{x \sim \mathbb{P}_{\text{data}}, y \sim \mathbb{P}(\cdot \mid x; \phi)} \left[ \left( \frac{\partial}{\partial \phi} \log \mathbb{P}(y \mid x; \phi) \right) \left( \frac{\partial}{\partial \phi} \log \mathbb{P}(y \mid x; \phi) \right)^T \right]
\end{equation*}

The FIM defines a Riemannian metric on the learning parameter space \citep{Amari2021-ci}, characterizing its curvature \citep{cencov1982}. However, learning models often exhibit singularities \citep{Watanabe_2009}, meaning that the rank of the matrix is less than its full dimension. 

Transformer models typically have an extremely large number of parameters, often ranging in the millions or even billions, due to their deep and wide architectures. This high-dimensional parameter space can lead to parameter redundancy and strong correlations between parameters, as noted by \cite{redundancyintransformer}. Such redundancy, or multicollinearity, can result in linear dependencies among the gradients of the log-likelihood with respect to different parameters. Another motivation stems from the behavior of optimizers such as Stochastic Gradient Descent (SGD) \citep{ruder2017overviewgradientdescentoptimization}. These optimizers tend to prefer flatter minima during gradient descent \citep{factorsminimasgd}, often resulting in plateaus in the gradient learning process. As a result, the FIM may exhibit eigenvalues close to zero, indicating singular or near-singular behavior. 

In this context, the rank of $\mathcal{I}(\phi)$, defined by the number of non-zero eigenvalues of the FIM, reflects the number of degrees of freedom (directions) at a point $\phi$ that can modify the probabilistic model $\mathbb{P}_\Phi(\cdot \mid \cdot)$. This is often referred to as the local dimensionality \citep{sun2024geometricmodelingoccamsrazor}. Figure \ref{fig: local_dimensionality} illustrates this concept, where the local dimensionality is $1$, while the dimension of the space is $2$.

\begin{figure}[H]
    \centering
    \resizebox{0.7\columnwidth}{!}{
    \begin{tikzpicture}[scale=0.7]
    % Enhanced color definitions with better contrast
    \definecolor{lightblue}{RGB}{200,225,250}
    \definecolor{mediumblue}{RGB}{70,140,210}
    \definecolor{darkblue}{RGB}{0,70,140}
    \definecolor{highlightred}{RGB}{220,50,50}

    % FIM eigenstructure visualization - adjusted position and scale
    \begin{scope}[shift={(-6,0,2)}, scale=1.0]
        \node[font=\footnotesize, anchor=south] at (1,3.2) {\textbf{Eigenvalue Spectrum}};

        % Eigenvalue spectrum axes with improved spacing
        \draw[->, thick] (0,0) -- (2.5,0) node[right, font=\footnotesize] {$i$};
        \draw[->, thick] (0,0) -- (0,2.8) node[above, font=\footnotesize] {$\lambda_i$};

        % Eigenvalue bars with better spacing
        \draw[fill=red!60, opacity=0.6] (0.6,2.2) rectangle (0.9,0);
        \draw[fill=blue!60, opacity=0.6] (1.3,0.2) rectangle (1.6,0);

        % Eigenvalue labels with improved positioning
        \node[rotate=90, font=\footnotesize] at (0.75,1.2) {$\lambda_1 \gg \epsilon$};
        \node[rotate=90, font=\footnotesize] at (1.45,0.7) {$\lambda_2 \approx \epsilon$};
    \end{scope}

    % Main surface visualization - adjusted position and perspective
    \begin{scope}[shift={(0,0,0)}, x={(0.7cm,-0.35cm)}, y={(0.7cm,0.35cm)}, z={(0cm,0.9cm)}, scale=1.1]
        % Draw the parabolic surface with optimized mesh density
        \foreach \x in {-2,-1.8,...,2} {
            \foreach \y in {-2,-1.8,...,2} {
                \pgfmathsetmacro{\z}{0.5*(\x)^2}
                \pgfmathsetmacro{\zNext}{0.5*(\x+0.2)^2}
                \pgfmathsetmacro{\colorval}{100*exp(-\z/1.5)}
                \draw[color=mediumblue!\colorval!white, opacity=0.8]
                    (\x,\y,\z) -- (\x+0.2,\y,\zNext);
            }
        }

        % Coordinate axes with better positioning
        \draw[-latex, thick] (-2.8,0,0) -- (2.8,0,0) node[right, font=\footnotesize] {$\theta_1$};
        \draw[-latex, thick] (0,-2.8,0) -- (0,2.8,0) node[above, font=\footnotesize] {$\theta_2$};
        \draw[-latex, thick] (0,0,0) -- (0,0,3.2) node[above, font=\footnotesize] {$\mathcal{L}(\theta)$};

        % Zero-loss manifold with improved visibility
        \draw[highlightred, thick, dashed] (0,-2.8,0) -- (0,2.8,0);
        % Reference point with better label placement
        \fill[black] (0,0,0) circle (2pt) node[below right, font=\footnotesize] {$\Theta^{(0)}$};

        % Local geometry indicators with improved positioning
        \node[highlightred, text width=4cm, align=center, font=\footnotesize] at (-0.7,-2.8,0.6)  
            {$\hat{idim}(\theta)$ = 1};

        % Tangent space visualization with better spacing
        \draw[->, green, thick] (0,0,0) -- (1.2,0,0.8)  
            node[right, text width=1.8cm, font=\footnotesize]{};
        \draw[->, green, thick] (0,0,0) -- (0,1.2,0) 
            node[above, text width=2cm, font=\footnotesize]{};

        % Zoom boxes with improved visibility
        \draw[red!60, thick, dashed] (-0.6,-0.6,0) rectangle (0.6,0.6,0.5);  

        \draw[blue!60, thick, dashed] (-0.15,-0.6,0) rectangle (0.15,0.6,0.25);    
    \end{scope}

    % Zoomed views with improved positioning and spacing
    % Informative direction
    \begin{scope}[shift={(7,1.8,0)}, scale=1.0]  
        \node[font=\footnotesize] at (0,1.8) {\textbf{Quadratic Loss}};
        \draw[->, thick] (-1.8,0,0) -- (1.8,0,0) node[right, font=\footnotesize] {$\theta_1$};
        \draw[->, thick] (0,0,0) -- (0,2.8,0) node[above, font=\footnotesize] {$\mathcal{L}(\theta)$};  
        
        \draw[mediumblue, thick, samples=100, smooth, domain=-1.5:1.5] 
            plot (\x, {0.5*\x*\x});
            
        \draw[->, red!60, thick] (-0.6,0.6) to[out=180,in=100] (-7,-1.0);
        \node[red!60, font=\footnotesize] at (-3.5,1.2) {Informative Direction};
    \end{scope}

    % Uninformative direction
    \begin{scope}[shift={(7,-1.8,0)}, scale=1.0]  
        \node[font=\footnotesize] at (0,1.8) {\textbf{Constant Loss}};
        \draw[->, thick] (-1.8,0,0) -- (1.8,0,0) node[right, font=\footnotesize] {$\theta_2$};
        \draw[->, thick] (0,0,0) -- (0,2.8,0) node[above, font=\footnotesize] {$\mathcal{L}(\theta)$};  
        
        \draw[mediumblue, thick] (-1.5,0) -- (1.5,0);
        
        \draw[->, blue!60, thick] (-0.6,0.6) to[out=180,in=-45] (-7,1.5);
        \node[blue!60, font=\footnotesize] at (-4.2,0) {Uninformative Direction};
    \end{scope}
\end{tikzpicture}
}
    \caption{Assume that locally around $\Theta^{(0)}$, the loss function can be approximated by $\mathcal{L}(\theta_1, \theta_2) = \frac{1}{2} \theta_1^2$. In this scenario, the loss landscape exhibits a single free direction. The loss depends exclusively on $\theta_1$, while $\theta_2$ has no influence on it. As a result, changing $\theta_2$ alone does not affect the loss, making $\theta_2$ a free direction in the landscape. In contrast, variations in $\theta_1$ lead to changes in the loss, meaning that the zero-loss set forms a line along the $\theta_2$-axis. Therefore, the local dimensionality of the low-loss region is $1$.
}
    \label{fig: local_dimensionality}
\end{figure}
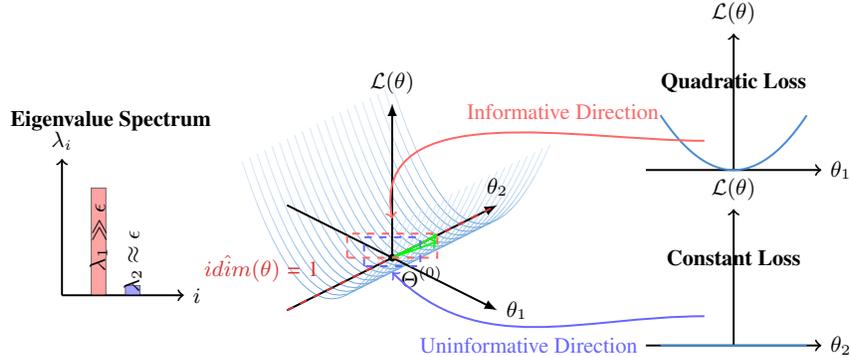

\begin{definition}[\textbf{Local Dimensionality}]
    The local dimensionality, denoted as $d(\phi)$, is defined as the rank of the information matrix $\mathcal{I}(\phi)$. It represents the number of parameters that need to be optimized in the model, indicating the effective dimensionality of the parameter space around the point $\phi$.
\end{definition}

Ideally, we aim to compute the local dimensionality of the parameter space at each gradient step. However, two primary challenges hinder this approach. Firstly, the information matrix behaves as a random matrix, typically maintaining full rank with probability $1$ \citep{FENG2007689}. Secondly, the computational feasibility poses a significant obstacle, as computing the FIM at each step requires extensive computational resources.

While the FIM is almost surely of full rank, it often has very small eigenvalues, on the order of $\epsilon \in \mathbb{R}^{+}$. According to the Cramér-Rao bound, the variance of the parameter estimates is greater than or equal to $1/\epsilon$. Therefore, parameters associated with such small eigenvalues provide negligible information about the model and can be considered effectively uninformative. Disregarding parameters with very small eigenvalues leads us to the concept of intrinsic dimension. The intrinsic dimension is defined as the minimum number of parameters required to capture the local variance of the data points effectively. Consequently, the intrinsic dimension represents a lower bound on the local dimensionality.

\begin{theorem}[\textbf{Intrinsic Dimension as a Lower Bound}]
\label{idim}
The intrinsic dimension $\hat{\text{idim}}(\phi)$ is a lower bound to the local dimensionality $d(\phi)$.
\[ d(\phi) \geq \hat{\text{idim}}(\phi). \]
\end{theorem}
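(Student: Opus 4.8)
The plan is to reduce the statement to an elementary comparison between two sets of eigenvalue indices of the Fisher information matrix. First I would fix a spectral decomposition $\mathcal{I}(\phi) = \sum_{i=1}^{N} \lambda_i(\phi)\, v_i v_i^{\top}$, with eigenvalues $\lambda_1(\phi) \ge \lambda_2(\phi) \ge \cdots \ge \lambda_N(\phi) \ge 0$ and an orthonormal eigenbasis $\{v_i\}$, where $N$ is the ambient parameter dimension. By the Definition of local dimensionality, $d(\phi) = \operatorname{rank}\mathcal{I}(\phi) = \#\{\, i : \lambda_i(\phi) > 0 \,\}$, so it suffices to realize the intrinsic dimension as (the cardinality of) a subset of $\{\, i : \lambda_i(\phi) > 0 \,\}$.

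Next I would make the notion of intrinsic dimension precise through the thresholding argument already sketched in the text, using the Cramér--Rao bound. Since $\mathcal{I}(\phi)$ is the covariance of the score $\partial_\phi \log \mathbb{P}(y\mid x;\phi)$, the variance of any unbiased estimator of the coordinate of $\phi$ along $v_i$ is at least $1/\lambda_i(\phi)$; hence, for a fixed resolution $\epsilon>0$, every eigendirection with $\lambda_i(\phi) \le \epsilon$ carries estimation variance at least $1/\epsilon$ and is, at that scale, statistically indistinguishable from a flat direction — it contributes no identifiable degree of freedom and is excluded from the count of effective parameters. I would therefore take $\hat{\text{idim}}(\phi) = \#\{\, i : \lambda_i(\phi) > \epsilon \,\}$; alternatively, if one keeps $\hat{\text{idim}}$ as the output of a data-driven estimator (TwoNN on the score cloud, or PCA on $\mathcal{I}(\phi)$), this thresholded count is an \emph{upper} bound for it, because a distance-based or retained-variance estimate cannot resolve directions whose spread is $O(\sqrt{\epsilon})$ below the ambient scale.

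The conclusion is then immediate: since $\epsilon>0$ we have the inclusion $\{\, i : \lambda_i(\phi) > \epsilon \,\} \subseteq \{\, i : \lambda_i(\phi) > 0 \,\}$, and passing to cardinalities gives $\hat{\text{idim}}(\phi) \le d(\phi)$. Equivalently, in geometric terms, the principal directions retained by the intrinsic-dimension estimator span a linear subspace of the column space of $\mathcal{I}(\phi)$, and monotonicity of dimension under subspace inclusion yields the bound — this is the linear-map intuition of Section~3.1 applied to the score covariance rather than to a single weight matrix.

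I expect the only real obstacle to lie in the first reduction: committing to a definition of $\hat{\text{idim}}(\phi)$ that is at once faithful to how it is estimated in practice and rigid enough to be compared with $\operatorname{rank}\mathcal{I}(\phi)$. Once intrinsic dimension is identified with (or dominated by) an $\epsilon$-thresholded eigenvalue count of the FIM, the inequality is a one-line set-inclusion; the substance of the argument is entirely in justifying that identification — in particular, that near-zero-curvature directions genuinely fail to register in a local, distance-based dimension estimate, and that the comparison is made pointwise at $\phi$ (or on a neighbourhood where the rank is locally constant, which is the generic case).
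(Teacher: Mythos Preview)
Your proposal is correct and follows essentially the same route as the paper: both identify $d(\phi)$ with the count of nonzero FIM eigenvalues, invoke Cram\'er--Rao to justify discarding eigendirections with $\lambda_i \le \epsilon$, and read off the inequality from the resulting set inclusion $\{\lambda_i > \epsilon\} \subseteq \{\lambda_i > 0\}$. Your version is somewhat more explicit --- you write out the spectral decomposition and frame the last step as a cardinality comparison, and you flag the definitional ambiguity in $\hat{\text{idim}}$ that the paper leaves informal --- but the substance is identical.
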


Several significant challenges persist. First, the computation of the FIM and the determination of its rank are prohibitively expensive in terms of computational resources. Second, estimating the intrinsic dimension of the neuromanifold is also infeasible. Furthermore, the required number of parameters to optimize (i.e. the rank of the FIM) pertains to the entire model rather than to each independent matrix, resulting in a high lack of granularity.

However, we have access to the input data and its representations across different transformer blocks within the large language model. Consequently, we can shift our focus to the data manifold, which is subjected to a series of transformations that map it to new representations, resulting in manifolds with differing geometries. To analyze the changes in geometry, particularly the alterations in dimensionality, we will begin by defining the components of the transformer blocks. Each transformer block comprises two primary components: a multi-head attention mechanism and a feed-forward network. Additionally, it incorporates skip connections, which are essential for mitigating the rank collapse problem, and a normalization layer.

\begin{theorem}[\textbf{Rank Bound of Transformer Blocks}]
\label{rank_theorem}
Let $\mathcal{M}$ denote a language model consisting of $N$ transformer blocks. For each $i \in \{1, 2, \dots, N\}$, the $i$-th transformer block is represented by $\mathcal{T}_i : \mathbb{R}^{n_{i-1}} \times \mathbb{R}^{p_{i-1}} \to \mathbb{R}^{n_i}$, which maps the hidden state $\mathcal{H}_{i-1} \subset \mathbb{R}^{n_{i-1}}$ and parameters $\theta_{i-1} \in \mathbb{R}^{p_{i-1}}$ to the next hidden state $\mathcal{H}_i \subset \mathbb{R}^{n_i}$. Assume that the hidden state $\mathcal{H}_i$ lies on a manifold $\mathcal{N}_i$ with intrinsic dimension $d_i$ embedded in $\mathbb{R}^{n_i}$, while $\mathcal{H}_{i-1}$ lies on a manifold $\mathcal{N}_{i-1}$ with intrinsic dimension $d_{i-1}$ embedded in $\mathbb{R}^{n_{i-1}}$. The rank of the transformer block $\mathcal{T}_i$ is constrained by the inequality
\begin{equation*}
d_i  \leq \text{rank}(\mathcal{T}_i),
\end{equation*}
where the rank of $\mathcal{T}_i$ at $\theta_{i-1}$ is defined as $\text{rank}(\mathcal{T}_i) = \max_{x \in \mathcal{H}_{i-1}} \text{rank}(J(\mathcal{T}_i, x, \theta_{i-1})),$ with $J(\mathcal{T}_i, x, \theta_{i-1})$ representing the Jacobian matrix of $\mathcal{T}_i$ evaluated at $x \in \mathcal{H}_{i-1}$ and $\theta_{i-1}$.
\end{theorem}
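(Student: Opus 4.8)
The plan is to read the transformer block with its parameters frozen as a smooth map $f := \mathcal{T}_i(\,\cdot\,,\theta_{i-1}) : \mathbb{R}^{n_{i-1}} \to \mathbb{R}^{n_i}$ and to control the dimension of its image by the largest rank attained by its Jacobian. First I would restrict $f$ to the input data manifold, $\tilde f := f|_{\mathcal{N}_{i-1}} : \mathcal{N}_{i-1} \to \mathbb{R}^{n_i}$; since the hidden states at level $i$ are by construction the images of those at level $i-1$, the map $\tilde f$ sends $\mathcal{N}_{i-1}$ onto $\mathcal{H}_i$, which (up to a set of positive $d_i$-dimensional measure) is the manifold $\mathcal{N}_i$. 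For $x\in\mathcal{N}_{i-1}$ the intrinsic differential $D\tilde f_x : T_x\mathcal{N}_{i-1}\to T_{\tilde f(x)}\mathcal{N}_i$ is the restriction of the ambient Jacobian $J(\mathcal{T}_i,x,\theta_{i-1})$ to the subspace $T_x\mathcal{N}_{i-1}\subseteq\mathbb{R}^{n_{i-1}}$, and restricting a linear map to a subspace cannot increase its rank, so $\text{rank}\,J(\mathcal{T}_i,x,\theta_{i-1}) \ge \text{rank}\,D\tilde f_x$ for every $x$. Hence it suffices to exhibit a point $x_0$ — which can be chosen in $\mathcal{H}_{i-1}$, using that $\mathcal{H}_{i-1}$ is dense in $\mathcal{N}_{i-1}$ and that the Jacobian rank is lower semicontinuous — at which $\text{rank}\,D\tilde f_{x_0}\ge d_i$.

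The engine is a Sard-type statement: if $g:M\to N$ is a smooth map of manifolds, $\dim N = d$, and the image $g(M)$ does not have $d$-dimensional Hausdorff measure zero in $N$, then $\text{rank}\,Dg_x\ge d$ for some $x\in M$. I would prove this by contraposition. Suppose $\text{rank}\,Dg_x\le d-1$ for all $x$, and cover $N$ by countably many charts $\pi_\alpha:V_\alpha\to\Omega_\alpha\subseteq\mathbb{R}^d$. On $g^{-1}(V_\alpha)$ the composite $\pi_\alpha\circ g$ is a smooth map into $\mathbb{R}^d$ with $\text{rank}\,D(\pi_\alpha\circ g)_x\le\text{rank}\,Dg_x\le d-1<d$, so every point of $g^{-1}(V_\alpha)$ is a critical point of $\pi_\alpha\circ g$; Sard's theorem then forces the set of critical values $\pi_\alpha(g(M)\cap V_\alpha)$ to be Lebesgue-null in $\mathbb{R}^d$, hence $g(M)\cap V_\alpha$ to be null in $N$. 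Summing over the countable cover, $g(M)$ is null in $N$, a contradiction. Applying this with $g=\tilde f$, $N=\mathcal{N}_i$, $d=d_i$ produces the desired $x_0$, and chaining the two rank inequalities yields $\text{rank}(\mathcal{T}_i)=\max_{x\in\mathcal{H}_{i-1}}\text{rank}\,J(\mathcal{T}_i,x,\theta_{i-1}) \ge d_i$.

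I expect the real work to lie in the hypotheses rather than in this analytic core. First, one needs $\mathcal{T}_i$ to be $C^\infty$ (or $C^k$ with $k$ large enough for Sard's theorem at the dimensions in play), which holds for blocks built from attention, residual connections, layer normalization away from zero variance, and smooth activations such as GELU, but fails for non-smooth activations; I would record this as a standing regularity assumption. The more delicate point is to make precise the hypothesis that $\mathcal{H}_i$ lies on a manifold $\mathcal{N}_i$ of intrinsic dimension $d_i$: the argument genuinely requires the image to occupy a set of positive $d_i$-dimensional Hausdorff measure in $\mathcal{N}_i$ — e.g.\ to contain a relatively open patch of $\mathcal{N}_i$, or to equal $\mathcal{N}_i$ — since a merely dense but measure-zero image could evade the conclusion. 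I would therefore state explicitly that $\mathcal{T}_i(\,\cdot\,,\theta_{i-1})$ restricted to the input manifold is surjective onto $\mathcal{N}_i$ (or at least has image of full $d_i$-dimensional measure). Finally I would verify the elementary but easy-to-fumble steps: that $D\tilde f_x$ does land in $T_{\tilde f(x)}\mathcal{N}_i$, and that passing from this intrinsic differential to the full $n_i\times n_{i-1}$ ambient Jacobian only enlarges the rank, so the bound as stated is in fact conservative.
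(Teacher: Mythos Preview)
Your argument is correct and rests on the same analytic core as the paper's: both bound $d_i$ by the maximal Jacobian rank via Sard's theorem, with the paper packaging the step through the constant rank theorem applied to $\mathcal{T}_i\circ\psi$, where $\psi$ is a local chart of $\mathcal{N}_{i-1}$ extended by the fixed parameter $\theta_{i-1}$, and then invoking the chain-rule inequality $\text{rank}(J_{\mathcal{T}_i}\!\cdot\! J_\psi)\le\text{rank}(J_{\mathcal{T}_i})$. Your contrapositive Sard argument is a more self-contained version of the same mechanism and your explicit list of regularity and surjectivity hypotheses is sharper than what the paper states. One interpretive difference worth flagging: the paper treats $J(\mathcal{T}_i,x,\theta_{i-1})$ as the \emph{full} Jacobian of $\mathcal{T}_i$ in both arguments (an $n_i\times(n_{i-1}+p_{i-1})$ matrix), not the $n_i\times n_{i-1}$ partial you work with; since restricting to the $x$-block can only lower the rank, your version actually proves a slightly stronger inequality, which in turn implies the paper's and feeds directly into the subsequent corollary separating the $d_{i-1}$ input directions from the parameter directions.
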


\begin{corollary}[\textbf{Bound on Parameters for Transformer Block Optimization}]
\label{corollary}
Let \( N_{i-1} \) represent the number of parameters required to optimize at transformer block \( i \). Then, the following inequality holds:
\[
\max(d_i - d_{i-1}, 0) \leq N_{i-1}.
\]
\end{corollary}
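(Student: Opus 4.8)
The plan is to obtain the corollary directly from Theorem \ref{rank_theorem} by splitting the Jacobian of the transformer block into an ``input'' part and a ``parameter'' part and invoking subadditivity of matrix rank. First I would fix a point $x^{\star} \in \mathcal{H}_{i-1}$ at which the maximum defining $\text{rank}(\mathcal{T}_i)$ is attained, so that $\text{rank}(\mathcal{T}_i) = \text{rank}(J(\mathcal{T}_i, x^{\star}, \theta_{i-1}))$. Since $\mathcal{T}_i$ is a function of the pair $(x,\theta) \in \mathbb{R}^{n_{i-1}} \times \mathbb{R}^{p_{i-1}}$, its Jacobian at $(x^{\star},\theta_{i-1})$ decomposes into block columns $J(\mathcal{T}_i, x^{\star}, \theta_{i-1}) = \big[\, J_x \;\big|\; J_\theta \,\big]$, where $J_x$ gathers the partial derivatives with respect to the hidden-state coordinates restricted to the tangent space $T_{x^{\star}}\mathcal{N}_{i-1}$, and $J_\theta$ gathers the partial derivatives with respect to the trainable parameters. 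Because horizontal concatenation is rank-subadditive, $\text{rank}([A\mid B]) \le \text{rank}(A)+\text{rank}(B)$, we get $\text{rank}(\mathcal{T}_i) \le \text{rank}(J_x) + \text{rank}(J_\theta)$.

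The next step is to bound each block separately. For $J_x$: the admissible perturbations of the input are exactly the vectors tangent to the data manifold $\mathcal{N}_{i-1}$, a space of dimension $d_{i-1}$, so $J_x$ has at most $d_{i-1}$ independent columns and $\text{rank}(J_x) \le d_{i-1}$. For $J_\theta$: by definition $N_{i-1}$ is the number of parameters that are actually updated at block $i$ — equivalently, the dimension of the subspace of $\mathbb{R}^{p_{i-1}}$ in which the parameter update is constrained to lie — so restricting the parameter derivative to that subspace leaves at most $N_{i-1}$ columns, giving $\text{rank}(J_\theta) \le N_{i-1}$. Chaining these with Theorem \ref{rank_theorem} yields $d_i \le \text{rank}(\mathcal{T}_i) \le d_{i-1} + N_{i-1}$, hence $d_i - d_{i-1} \le N_{i-1}$. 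Since $N_{i-1}$ counts parameters and is therefore nonnegative, the inequalities $d_i - d_{i-1} \le N_{i-1}$ and $0 \le N_{i-1}$ together give $\max(d_i - d_{i-1}, 0) \le N_{i-1}$, which is the assertion.

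The rank-subadditivity step is routine; the delicate points are the two block bounds. I expect the main obstacle to be making precise that the input columns of the Jacobian must be read as derivatives along $T_{x^{\star}}\mathcal{N}_{i-1}$ rather than along all of $\mathbb{R}^{n_{i-1}}$ — this is what prevents $\text{rank}(J_x)$ from being as large as $\min(n_i,n_{i-1})$, and it is precisely where the manifold hypothesis of Theorem \ref{rank_theorem} is used — together with pinning down the operational meaning of $N_{i-1}$ so that ``number of parameters required to optimize'' coincides with the column rank of $J_\theta$. Once these two identifications are justified, the chain of inequalities closes with no further computation.
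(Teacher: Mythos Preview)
Your proposal is correct and follows essentially the same route as the paper: split the Jacobian of $\mathcal{T}_i$ into an input block and a parameter block, bound the input block by $d_{i-1}$ via the manifold hypothesis and the parameter block by $N_{i-1}$ by definition, then chain with Theorem~\ref{rank_theorem} and the nonnegativity of $N_{i-1}$. If anything, your version is slightly more careful --- you invoke rank subadditivity $\text{rank}([A\mid B]) \le \text{rank}(A)+\text{rank}(B)$ for the block decomposition, whereas the paper writes the corresponding step as an equality between ``non-noisy directions'' without justification; the inequality is all that is needed and is what actually holds.
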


Recomputing the optimal number of parameters after each gradient step is computationally expensive. However, as training progresses, the model learns to compress data, resulting in fewer parameters being responsible for the local variance of data points. Therefore, it is reasonable to assume that the intrinsic dimensionality of the data and the rank of the transformer blocks decrease during training. 

\begin{conjecture}[\textbf{Transformer Rank Bound Dynamics}]
\label{conjecture}
    Let \( i \in \{1, 2, \dots, N\} \), and consider the process of fine-tuning. During this process, both the rank of each transformer block \( \text{rank}(\mathcal{T}_i) \) and the intrinsic dimension \( d_i \) of the manifold \( \mathcal{H}_i \) decrease. Let \( d_i^0 \) denote the initial intrinsic dimension. Then, the following inequality holds:
    \[
    d_i^0 \leq \text{rank}(\mathcal{T}_i^t),
    \]
    where \( \mathcal{T}_i^t \) represents the transformer block after the \( t \)-th gradient step. As fine-tuning progresses, this inequality becomes progressively tighter, implying that the gap between the initial intrinsic dimension and the rank of the transformer block reduces over time.
\end{conjecture}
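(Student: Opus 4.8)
I would split the conjecture into three sub-claims and attack each with a different tool: (a) the base case $d_i^0 \le \text{rank}(\mathcal{T}_i^0)$, which is Theorem~\ref{rank_theorem} applied before any gradient step; (b) persistence of the lower bound, $d_i^0 \le \text{rank}(\mathcal{T}_i^t)$ for all $t\ge 0$; and (c) the monotone decay of $t\mapsto\text{rank}(\mathcal{T}_i^t)$ together with the shrinking of the gap $\text{rank}(\mathcal{T}_i^t)-d_i^0$. The crux is that (b) is a statement about the parametrization and can be made rigorous under a quantitative ``small-update'' hypothesis, whereas (c) is a statement about the optimizer and the loss landscape and can, at best, be argued heuristically -- which is precisely why the result is phrased as a conjecture.

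For (b), fix the input $x^\star \in \mathcal{H}_{i-1}$ that realizes the initial rank, i.e. $\text{rank}\big(J(\mathcal{T}_i,x^\star,\theta_{i-1}^0)\big) = \text{rank}(\mathcal{T}_i^0)\ge d_i^0$, and let $\sigma_{d_i^0}>0$ be the $d_i^0$-th singular value of that Jacobian (positive because the matrix has rank at least $d_i^0$). Writing the Jacobian after $t$ steps as $J(\mathcal{T}_i,x^\star,\theta_{i-1}^t) = J(\mathcal{T}_i,x^\star,\theta_{i-1}^0) + E_t$, Weyl's perturbation inequality gives $\sigma_{d_i^0}\big(J(\mathcal{T}_i,x^\star,\theta_{i-1}^t)\big) \ge \sigma_{d_i^0} - \|E_t\|$. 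I would then bound $\|E_t\|$ through the LoRA parametrization (the update $\theta_{i-1}^t-\theta_{i-1}^0$ factors through the low-rank adapters) and the accumulated step sizes, e.g. $\|E_t\| \le L\sum_{s<t}\eta_s\,\|\nabla_\theta \mathcal{L}\|$ for a local Lipschitz constant $L$ of $\theta \mapsto J(\mathcal{T}_i,x^\star,\theta)$. As long as fine-tuning stays in the regime $\|E_t\| < \sigma_{d_i^0}$ -- exactly the regime LoRA is engineered for, with small, low-rank updates keeping the adapted model in a neighbourhood of the pretrained one -- the $d_i^0$-th singular value stays positive, so $\text{rank}(\mathcal{T}_i^t)\ge \text{rank}\big(J(\mathcal{T}_i,x^\star,\theta_{i-1}^t)\big)\ge d_i^0$.

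For (c) I would appeal to the implicit bias discussed just before the conjecture: stochastic gradient descent is attracted to flat minima, so the Fisher information develops eigenvalues below the noise floor $\epsilon$, and these flat directions are exactly directions along which the block $\mathcal{T}_i$ becomes locally rank-deficient. The plan is to show that the ``surplus'' singular mass $\sum_{k>d_i^0}\sigma_k\big(J(\mathcal{T}_i^t)\big)$ above the floor guaranteed by (b) behaves like a Lyapunov quantity eroded by the dynamics, using that the fine-tuning objective is itself of low intrinsic dimension (Corollary~\ref{corollary}, with small $d_i-d_{i-1}$) and hence does not feed, while SGD noise actively damps, those tail singular values. Combining (b) and (c), $\text{rank}(\mathcal{T}_i^t)$ is squeezed between the fixed floor $d_i^0$ and a non-increasing sequence, which yields both $d_i^0\le \text{rank}(\mathcal{T}_i^t)$ and the ``progressively tighter'' conclusion, with $\text{rank}(\mathcal{T}_i^t)\downarrow d_i^0$ in the limit -- this is what licenses setting each LoRA rank from the \emph{pretrained} intrinsic-dimension profile, since the bound it provides remains valid throughout training.

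The main obstacle is step (c). Rank is only lower semicontinuous, so along a generic smooth trajectory it can only jump up; its decay here is a property of the loss and the optimizer, not of the map, and an unconditional proof would require either introducing an explicit low-rank or weight-decay regularizer and showing it biases the stationary points of the fine-tuning loss toward lower rank, or a precise description of the stationary behaviour of SGD near flat minima, which is open in general. A subtler obstacle is making ``the intrinsic dimension $d_i$ decreases'' precise and monotone: $\mathcal{H}_i^t$ is the image of a fixed data distribution under an evolving map, so its estimated dimension can fluctuate, and controlling it rigorously would need quantitative control of how the parametrized map deforms the data manifold. For these reasons I expect only (a) and (b) to admit clean proofs -- the latter under the stated small-update hypothesis -- with (c), and hence the full conjecture, left to be corroborated by the intrinsic-dimension measurements taken before and during fine-tuning in the experiments.
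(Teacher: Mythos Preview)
Your proposal goes considerably further than the paper does. The paper explicitly offers only an ``intuitive proof'': before fine-tuning the hidden states occupy a large unconstrained space so both $d_i^0$ and $\text{rank}(\mathcal{T}_i^0)$ are high; as the model specializes it compresses to a lower-dimensional subspace, so both quantities decrease and the gap $\text{rank}(\mathcal{T}_i)-d_i^0$ shrinks. That is the entire argument --- no decomposition, no perturbation bounds, no optimizer analysis. Your (a)--(b)--(c) split, the Weyl inequality for (b), and the Lyapunov-style sketch for (c) are all your own additions, and they buy something real: (a) is immediate from Theorem~\ref{rank_theorem}; (b) under your small-update hypothesis is the first place in either treatment where $d_i^0 \le \text{rank}(\mathcal{T}_i^t)$ is actually argued rather than asserted; and your candid assessment that (c) is the irreducibly heuristic part matches why the paper leaves this as a conjecture.

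One point to tighten in (b): you fix $x^\star \in \mathcal{H}_{i-1}$ realizing the initial rank and then perturb only $\theta_{i-1}$, but $\text{rank}(\mathcal{T}_i^t)$ is by definition $\max_{x\in\mathcal{H}_{i-1}^t}\text{rank}\big(J(\mathcal{T}_i,x,\theta_{i-1}^t)\big)$, and $\mathcal{H}_{i-1}^t$ itself drifts because the \emph{earlier} blocks are also being fine-tuned. So $x^\star$ need not lie in $\mathcal{H}_{i-1}^t$, and evaluating the Jacobian there may no longer witness the rank you need. The fix is straightforward --- extend your Lipschitz bound to the $x$ argument as well and track a nearby point $x_t^\star \in \mathcal{H}_{i-1}^t$ using the same cumulative-step budget applied to blocks $1,\dots,i-1$ --- but as written the perturbation $E_t$ is missing this second contribution. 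You already flag the evolution of $\mathcal{H}_i^t$ as an obstacle for monotonicity of $d_i$; the same evolution one layer up is also an obstacle for your rank-persistence argument.
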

\subsection{Methodology}
Figure \ref{fig: GeLoRA} provides a schematic representation of the GeLoRA methodology, which begins by computing the the intrinsic dimensions of data representations across the model's hidden states, allowing for an understanding of the manifold structure that each layer captures. For each layer $ i $, let $ d_i $ represent the intrinsic dimension of the data manifold at the input, and $ d_{i+1} $ the intrinsic dimension at the output. To ensure efficient low-rank adaptation (LoRA) parameters that align with the model’s geometry, the minimal rank $ r_i $ is set for each layer according to the condition $r_i \geq \max(d_{i+1} - d_i, 0),$ where the difference $ d_{i+1} - d_i $ indicates the required capacity to capture any dimensional expansion of the data manifold between consecutive layers. An adaptive scaling factor $ \alpha_i $ is then applied across layers to maintain a consistent ratio $ \alpha_i / r_i = \text{const} $, preserving the proportion of adaptation strength relative to rank. This enables an efficient fine-tuning process that balances expressivity with computational efficiency.
\begin{figure}[H]
\centering
\begin{tikzpicture}[scale=0.9]
\usetikzlibrary{decorations.pathreplacing,decorations.markings,patterns}

% Define colors
\definecolor{manifold1}{RGB}{70,130,180}
\definecolor{manifold2}{RGB}{160,82,45}
\definecolor{point1}{RGB}{25,25,112}
\definecolor{point2}{RGB}{139,69,19}

% ================== Step 1: Manifold Analysis (Further Left) ===================
\node[align=left] at (-15, 3.5) {\textbf{Step 1:} Compute Intrinsic \\ Dimensions};

% Input Manifold (random shape with sample points)
\begin{scope}[shift={(-12,0)}] % Shifted further left
    \shade[ball color=manifold1] plot [smooth cycle] coordinates {
        (-5.2,1) (-4.6,1.7) (-3.8,1.6) (-3.2,1) 
        (-3.7,0.4) (-4.4,0.2) (-5,0.5)};
    \draw[thick] plot [smooth cycle] coordinates {
        (-5.2,1) (-4.6,1.7) (-3.8,1.6) (-3.2,1) 
        (-3.7,0.4) (-4.4,0.2) (-5,0.5)};
    \foreach \pos in {(-4.7,1.1), (-4.2,1.4), (-3.7,1.1), (-4.3,0.7), (-3.8,0.8)} {
        \fill[point1] \pos circle (0.05);
    }
    \node[align=center] at (-4, 1) {$d_{i}$};
\end{scope}

% Larger Transformer Block (aligned to middle of Step 1 and Step 2)
\begin{scope}[shift={(-12,0)}]
    \draw[fill=gray!10, rounded corners] (-2.8,0) rectangle (-2,2.4);
    \foreach \y in {0.3,0.8,1.3,1.8} {
        \draw[fill=gray!30, rounded corners] (-2.7,\y) rectangle (-2.1,\y+0.4);
    }
    \node[rotate=0] at (-2.4,-0.6) {\small Transformer Block $i$};
\end{scope}

% Output Manifold (random shape with sample points)
\begin{scope}[shift={(-11.5,0)}]
    \shade[ball color=manifold2] plot [smooth cycle] coordinates {
        (-2.1,1) (-1.4,1.4) (-0.6,1.5) (0,1) 
        (-0.6,0.5) (-1.4,0.4)};
    \draw[thick] plot [smooth cycle] coordinates {
        (-2.1,1) (-1.4,1.4) (-0.6,1.5) (0,1) 
        (-0.6,0.5) (-1.4,0.4)};
    \foreach \pos in {(-1.7,1.1), (-1.2,1.3), (-0.7,1.1), (-1.3,0.8), (-0.8,0.7)} {
        \fill[point2] \pos circle (0.05);
    }
    \node[align=center] at (-1, 1) {$d_{i+1}$};
\end{scope}

% ================== Step 2: Rank Selection (Center) ===================
\node[align=left] at (-8, 3.5) {\textbf{Step 2:} Set Minimal LoRA \\ Ranks};

% Rank Plot
\draw[->] (-9.5, 0) -- (-6.7, 0) node[right] {\small Block Index};
\draw[->] (-9.5, 0) -- (-9.5, 2) node[above] {\small Rank};

\foreach \x/\h in {0/1.5, 1/0.8, 2/1.2, 3/0.9} {
    \shade[left color=blue!40, right color=blue!20] 
        ({-9.3+\x*0.6}, 0) rectangle ++(0.4, \h);
    \draw[thick] ({-9.3+\x*0.6}, 0) rectangle ++(0.4, \h);
}

\draw[decorate, decoration={brace,amplitude=5pt}, thick] 
    (-10.1, -0.5) -- (-6.4, -0.5);
\node[align=center, below] at (-8.2, -0.6) 
    {$r_i \geq \max(d_{i+1} - d_{i}, 0)$\\[0.1em]\small(Minimal Rank)};

% ================== Step 3: Efficient Finetuning (Right) ===================
\node[align=center] at (-1, 3.7) {\textbf{Step 3:} Efficient Finetuning};

% Finetuning Visualization with Tradeoff
\begin{scope}[shift={(-1,0)}]
    \draw[->] (-2,0) -- (0.5,0) node[right] {\small Expressivity};
    \draw[->] (-2,0) -- (-2,2.5) node[above] {\small Computational Cost};

    % Highlight the GeLoRA point
    \fill[red] (0,0.5) circle (3pt) node[above right] {\small GeLoRA};
    \node[align=center, below] at (-0.65, -0.4) 
    {\small Better Tradeoff};

\end{scope}

% Connecting arrows
\draw[->, thick, dashed] (-11,1) -- (-10,1); % Adjusted to match left shift
\draw[->, thick, dashed] (-5,1) -- (-4,1);
\end{tikzpicture}
\caption{Schematic of the GeLoRA methodology. The process includes intrinsic dimension analysis \textit{(Step 1)}, setting minimal LoRA ranks based on these dimensions \textit{(Step 2)}, and performing efficient fine-tuning to achieve an optimal balance between computational efficiency and model expressivity \textit{(Step 3)}.}
\label{fig: GeLoRA}
\end{figure}
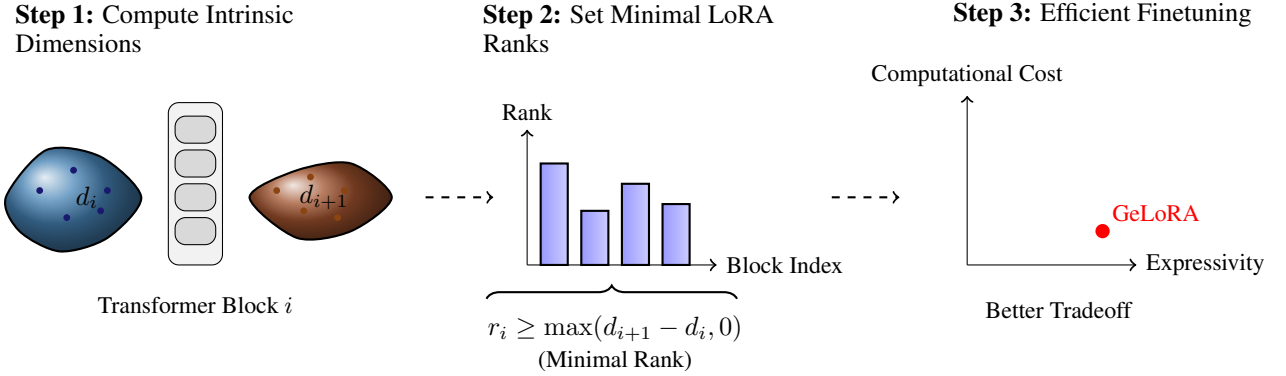
To estimate the intrinsic dimension $d_i$ of the hidden state $i$, we employ the two-nearest-neighbors (2-NN) method \cite{TwoNN}. Given a dataset in a high-dimensional feature space, we begin by identifying, for each data point $x_j$, its nearest and second-nearest neighbors, computing their respective distances $r_1(j)$ and $r_2(j)$. We then compute the ratio $\mu_j = \frac{r_2(j)}{r_1(j)}$, which encapsulates local geometric information. Under the assumption of locally uniform data density, the cumulative distribution function of the ratio $\mu = \frac{r_2}{r_1}$ is given by
\begin{equation*}
F(\mu | d_i) = 1 - \mu^{-d_i}
\end{equation*}
for $\mu \geq 1$ and $d_i > 0$. The intrinsic dimension $d_i$ can be estimated by fitting the empirical distribution of the observed ratios $\{\mu_j\}_{j=1}^N$ to this theoretical distribution, either through maximum likelihood estimation or through linear regression in log-log space of the complementary cumulative distribution.

In high-dimensional settings, the 2-NN method tends to provide a conservative estimate, often serving as a lower bound on the true intrinsic dimension. To illustrate this, we conduct experiments on established benchmark datasets, observing the 2-NN method’s behavior relative to the ground truth. To mitigate the risk of underestimating the intrinsic dimension—resulting in an inaccurate value of zero rank in some cases—we add a small offset of 1 to each rank lower bound. Furthermore, rank lower bound is computed for each transformer block as a whole, including the Key, Query, Value, and Output matrices. Since we cannot localize the specific important parameters within each matrix, we set the rank of each matrix in the transformer block equal to the computed intrinsic dimension. 
\begin{equation*}
    r_{K_i} = r_{Q_i} = r_{V_i} = r_{O_i} = \max(d_{i+1} - d_i, 0) + 1,
\end{equation*}
where $r_{K_i}, r_{Q_i}, r_{V_i}$, and $r_{O_i}$ are, respectively, the LoRA ranks of the Key, Query, Value and Output matrices of the transformer block $i$. A pseudocode description of GeLoRA is presented in Appendix \ref{appendix: pseudo}.

\subsection{Fine-Tuning Techniques and Datasets}
We evaluate the performance of our GeLoRA technique across several natural language processing tasks. First, we assess its performance on the GLUE benchmark for natural language understanding \citep{wang2019gluemultitaskbenchmarkanalysis}, using tasks such as CoLA \citep{warstadt-etal-2019-neural}, SST-2 \citep{socher-etal-2013-recursive}, MRPC \citep{dolan-brockett-2005-automatically}, STS-B \citep{cer-etal-2017-semeval}, QNLI \citep{rajpurkar2016squad100000questionsmachine}, and RTE \citep{daganidorte, BarHaim2006TheSP, giampiccolo-etal-2007-third}. We then evaluate question answering performance using the SQuAD dataset \citep{rajpurkar2016squad100000questionsmachine}. Finally, we investigate instruction-following tasks by fine-tuning the model on the Airoboros dataset \citep{airoboros2024} and evaluating on MT-Bench \citep{mtbench}. For natural language understanding and question answering, we use the DeBERTaV3 model \citep{he2021debertadecodingenhancedbertdisentangled}, following established practices in the literature. For instruction-following tasks, we fine-tune using Phi-2 \citep{textbooks2}. We compare GeLoRA's performance against several fine-tuning techniques, including weight update tuning \citep{bitfit}, adapter-based methods \citep{adapter, pfeiffer2021adapterfusionnondestructivetaskcomposition}, and LoRA and its variants \citep{lora, sora, adalora}.
\subsection{Experimental setting}
We implemented all algorithms using \textsc{PyTorch}, based on the publicly available \textsc{HuggingFace Transformers} \citep{wolf-etal-2020-transformers} code-base. For optimization, we used the \textsc{AdamW} optimizer \citep{loshchilov2019decoupledweightdecayregularization}, which features parameters set to $\epsilon = 10^{-6}$, $\beta_1 = 0.9$, and $\beta_2 = 0.999$, and we fixed the batch size to $32$. To facilitate fair comparisons across different fine-tuning methods, we employed \textsc{Optuna} \citep{akiba2019optunanextgenerationhyperparameteroptimization} for hyperparameter tuning, optimizing parameters such as learning rate, weight decay, warm-up ratio, learning scheduler type, and LoRA dropout over $50$ trials for each method. The numerical results were averaged over five runs with random seeds, and we report standard deviations to ensure statistical robustness. The alpha rank ratio for low-rank adaptation techniques was fixed at $32$, consistent with prior work \citep{lora, adalora}, and was not fine-tuned further. For estimating intrinsic dimension, we used the \textsc{scikit-dimension} package \citep{Bac_2021}. All experiments were conducted on \textsc{NVIDIA A100-SXM4 GPUs}. Additional details regarding the training process can be found in the Appendix \ref{appendix: training_details}.
\subsection{Numerical Results}
\subsubsection{Natural Language Understanding: GLUE Benchmark}

\vspace{-0.1cm}
\begin{table}[H]
\caption{Results with DeBERTaV3-base on GLUE test set. The best results for each dataset are highlighted in \textbf{bold}, while the second-best results are \underline{underlined}. We report the average correlation for STS-B. \textit{Full FT} represent full fine-tuning, \textit{HA Adapter} represents Houlsby Adapters, and \textit{PF Adapter} represents Pfeiffer Adapters.}
\label{performance-table}
\begin{center}

\resizebox{0.9\columnwidth}{!}{%
\begin{tabular}{l|r|rrrrrrrr|r}
\multicolumn{1}{c}{\bf Method} & \multicolumn{1}{c}{\bf \# Params} &  \multicolumn{1}{c}{\bf CoLA} & \multicolumn{1}{c}{\bf STS-B} & \multicolumn{1}{c}{\bf MRPC} & \multicolumn{1}{c}{\bf QNLI} & \multicolumn{1}{c}{\bf SST-2} & \multicolumn{1}{c}{\bf RTE} & \multicolumn{1}{c}{\bf QQP} & \multicolumn{1}{c}{\bf MNLI} & \multicolumn{1}{c}{\bf Average} \\ 
\toprule
Full FT & $184.42$M  & $68.28\pm 1.39$ & $91.32\pm0.45$ & $73.53\pm 3.25$ & $93.81\pm0.21$ & $94.68\pm0.30$ & $73.67\pm1.33$ & $88.54\pm0.23$ & $89.65\pm0.19$ & $84.19$\\
BitFit & $0.11$M  & $68.66\pm1.87$ & $89.40\pm0.57$ & $85.2\pm1.56$ & $92.10\pm0.13$ & $94.54\pm0.30$ & $75.11\pm2.52$ & $86.25\pm0.27$ &$86.04\pm0.58$ & $84.66$ \\

\midrule
HA Adapter & $0.65$M  & $68.46\pm 1.08$ & $91.26\pm0.13$ & $86.76\pm 0.44$ & $93.52\pm0.40$ & $95.32\pm0.35$ & $80.43\pm2.78$ & $89.08\pm0.06$ & $89.07\pm0.19$ & $86.74$ \\

PF Adapter & $0.62$M  & $68.59\pm 1.43$ & $89.85\pm0.13$ & $88.24\pm 1.07$ & $93.33\pm0.30$ & $\mathbf{95.55\pm0.41}$ & $79.14\pm2.95$ & $88.60\pm0.14$ & $88.82\pm0.07$ & $86.52$ \\

\midrule

$\text{LoRA}_{r=1}$  & $0.08$M & \underline{$69.68\pm0.92$} & $88.29\pm3.28$ & \underline{$88.43\pm1.37$} & $93.83\pm0.13$ & $95.04\pm0.43$ & $80.29\pm1.33$ & $90.41\pm0.05$ &$89.64$& \underline{$86.95$} \\

$\text{LoRA}_{r=2}$  & $0.15$M & $69.04\pm 1.51$ & $88.60\pm3.09$ & $87.75\pm0.69$ & $93.79\pm0.17$ & $95.04\pm0.22$ & $80.43\pm1.60$ & \underline{$90.78\pm0.11$} & $89.77$ & $86.90$\\

\midrule
$\text{SoRA}_{r=1}$  & $0.08$M & $61.78\pm 2.37$ & $78.88\pm 6.55$ & $87.45\pm3.06$ & $88.66\pm0.68$ & $91.94\pm0.52$ & $\mathbf{82.32\pm2.49}$ & & & $81.17$\\

$\text{SoRA}_{r=2}$  & $0.15$M & $67.85\pm1.33$  & $84.33\pm3.90$ & $88.04\pm2.00$ & $89.76\pm0.41$ & $91.40\pm0.32$ & $78.84\pm3.74$ & & & $84.04$\\

\midrule
$\text{AdaLoRA}_{r=1}$ & $0.15$M & $69.28\pm0.33$ & $\mathbf{92.08\pm0.15}$ & $84.61\pm0.91$ & $93.84\pm0.15$& $95.07\pm0.42$& $74.96\pm3.82$ & $89.92\pm0.10$ & \underline{$90.12\pm0.20$} & $86.23$\\
$\text{AdaLoRA}_{r=2}$ & $0.22$M & $64.76\pm1.49$ & $91.56\pm0.12$ & $87.25\pm0.93$ & $\mathbf{94.07\pm0.12}$  & $95.44\pm0.34$& \underline{$81.87\pm0.95$} & $90.12\pm0.08$ & $\mathbf{90.13\pm0.26}$ & $86.90$\\

\midrule
GeLoRA & -- & $\mathbf{70.96\pm0.96}$ & \underline{$91.66\pm0.48$} & $\mathbf{89.9\pm0.79}$ & \underline{$93.87\pm0.23$} & \underline{$95.05\pm0.24$} & $81.29\pm1.64$ & $\mathbf{90.81\pm0.12}$ & $89.84\pm0.22$ & $\mathbf{87.92}$ \\
\bottomrule
\end{tabular}
}
\label{tab: results}
\end{center}
\end{table}
\vspace{-0.2cm}

Our experimental results demonstrate the effectiveness of GeLoRA across multiple tasks in the GLUE benchmark. As shown in Table \ref{tab: results}, GeLoRA achieves competitive or superior performance compared to existing parameter-efficient fine-tuning methods while maintaining a minimal parameter footprint. Specifically, GeLoRA obtains an average score of $87.92$ across all evaluated tasks, outperforming strong baselines like HA Adapter ($86.74$), LoRA ($86.95$) and its variants ($86.90$).
On individual tasks, GeLoRA shows particularly strong performance on CoLA ($70.96$) and MRPC ($89.90$), achieving the best results among all parameter-efficient methods, while maintaining competitive performance on other tasks. The results are particularly impressive when considering the performance-to-parameter ratio. While other techniques achieves better results on some tasks (e.g., 95.55 on SST-2), it requires six orders of magnitude more parameters. Our method maintains comparable performance while being substantially more parameter-efficient, making it particularly suitable for resource-constrained scenarios.
What's particularly noteworthy is GeLoRA's parameter efficiency, as detailed in Table \ref{tab: mean_rank}. The method adaptively allocates parameters based on task complexity, ranging from 0.09M parameters for simpler tasks like QNLI and SST-2, to 0.13M parameters for more complex tasks such as MRPC and RTE. This adaptive parameter allocation results in optimal mean ranks while using significantly fewer parameters compared to full fine-tuning (184.42M parameters) and competitive with other efficient methods like LoRA and its variants (0.08M-0.22M parameters). Furthermore, our approach is more intuitive because models do not need to treat all datasets or tasks equally. During the pretraining phase, they may have already gained prior knowledge relevant to certain tasks or datasets, which reduces the need for extensive fine-tuning to achieve strong performance.
\vspace{-0.2cm}
\begin{table}[H]
\caption{Number of parameters in GeLoRA for each task.}
\centering
\resizebox{0.6\columnwidth}{!}{%
\begin{tabular}{l|rrrrrrrr}
\toprule
\textbf{Task} &  \textbf{CoLA} & \textbf{STS-B}  & \textbf{MRPC} & \textbf{QNLI} & \textbf{SST-2} & \textbf{RTE} &  \textbf{MNLI} & \textbf{QQP}\\
\midrule
\textbf{\# Params} & $0.10$M  &  $0.11$M & $0.13$M  &  $0.09$M  & $0.09$M  & $0.13$M & $0.10$M & $0.12$M\\
\textbf{Mean Rank} & $1.33$  &  $1.50$ & $1.75$ &  $1.25$  & $1.17$  & $1.75$ & $1.33$ & $1.58$\\
\textbf{Rounded Mean Rank} & $1$  &  $2$ & $2$ &  $1$  & $1$  & $2$ & $1$ & $2$\\
\bottomrule
\end{tabular}
}
\label{tab: mean_rank}
\end{table}
\vspace{-0.2cm}
A potential question that arises is how increasing the LoRA ranks uniformly, or introducing greater complexity into the adaptive variants AdaLoRA and SoRA, might impact their performance, and how they would compare to GeLoRA. To address this, we conduct a comparison in a high-rank setting, where we adjust the lower rank bounds of GeLoRA by applying an offset to align with the higher ranks selected for the other fine-tuning techniques. Specifically, we set the ranks as follows: 
\begin{equation*}
    r_{K_i} = r_{Q_i} = r_{V_i} = r_{O_i} = \max(d_{i+1} - d_i, 0) + o,
\end{equation*}
 where $o$ is the applied offset to GeLoRA ranks. 
 \vspace{-0.2cm}
 \begin{table}[H]
\caption{Results with DeBERTaV3-base on GLUE test set using higher ranks. The best results for each dataset are highlighted in \textbf{bold}, while the second-best results are \underline{underlined}. We report the average correlation for STS-B.}
\label{performance-table-shift}
\begin{center}

\resizebox{\columnwidth}{!}{%
\begin{tabular}{l|r|rrrrrr|r}
\multicolumn{1}{c}{\bf Method} & \multicolumn{1}{c}{\bf \# Params} &  \multicolumn{1}{c}{\bf CoLA} & \multicolumn{1}{c}{\bf STS-B} & \multicolumn{1}{c}{\bf MRPC} & \multicolumn{1}{c}{\bf QNLI} & \multicolumn{1}{c}{\bf SST-2} & \multicolumn{1}{c}{\bf RTE} & \multicolumn{1}{c}{\bf Average} \\ 
\toprule

$\text{LoRA}_{r=4}$  & $0.3$M & $67.52\pm 0.38$ & $89.84\pm1.36$ & $\mathbf{89.12\pm 2.09}$ & \underline{$93.77\pm0.10$} & \underline{$95.39\pm0.40$} & \underline{$81.73\pm1.55$} & \underline{$86.23$} \\

$\text{SoRA}_{r=4}$  & $0.3$M & $63.47\pm1.99$ & $81.68\pm7.93$ & $87.06\pm1.15$ & $90.04\pm0.67$ & $92.46\pm0.59$ & $\mathbf{86.09\pm2.69}$ & $83.47$\\

$\text{AdaLoRA}_{r=4}$ & $0.44$M & $\mathbf{68.62\pm1.22}$ & $90.54\pm0.23$ & $84.31\pm1.45$ & $\mathbf{94.11\pm0.12}$ & \underline{$95.39\pm0.44$} & $79.71\pm1.24$ & $85.45$\\

\midrule
GeLoRA & $--$ & \underline{$68.53\pm0.71$} & $\mathbf{91.38\pm0.43}$ &  \underline{$88.12\pm0.73$} & $93.15\pm0.17$ & $\mathbf{95.44\pm0.35}$ & $80.92\pm1.47$ & $\mathbf{86.26}$ \\
\bottomrule
\end{tabular}
}
\label{tab: shift_results}
\end{center}
\end{table}

  \vspace{-0.3cm}
Moreover, using dataset-specific ranks aligns with a common practice used to enhance performance during fine-tuning across various benchmarks, which is intermediate task tuning. This approach involves fine-tuning a model on a different task from the target task as a preliminary warm-up step. While this methodology is primarily intuitively motivated—rooted in the idea of learning common features and fostering common-sense reasoning—its theoretical justification remains less clear. In this regard, we aim to provide a plausible explanation for the effectiveness of this approach. We focus on three tasks: MRPC \citep{dolan-brockett-2005-automatically}, STS-B \citep{cer-etal-2017-semeval}, and RTE \citep{daganidorte, BarHaim2006TheSP, giampiccolo-etal-2007-third}. Although each dataset has a specific focus, they all assess semantic relationships between pairs of texts, presenting a strong case for a sequential fine-tuning strategy. MRPC targets the identification of paraphrases, where two sentences convey the same idea using different wording. STSB evaluates the degree of semantic similarity between sentences on a continuous scale ranging from 0 to 5. RTE determines whether one sentence entails another, reflecting a distinct aspect of semantic relationships. These tasks require the model to comprehend nuanced semantic properties, including synonyms, paraphrases, and entailment. As a result, the underlying language representations across these datasets exhibit significant similarities. Consequently, we hypothesize that fine-tuning on MRPC can facilitate the subsequent fine-tuning processes for STSB and RTE.

We posit that the main reason for this enhanced performance stems from data compression, as the model learns features relevant to the target tasks during intermediate training. To evaluate this hypothesis, we theorize that the lower bound of intrinsic dimensions will become looser after compression. Our experimental results support this hypothesis. For instance, we observe a decrease in the mean intrinsic dimension for RTE (from $13.47$ to $12.97$) as shown in Figure \ref{fig:RTE}, while Figure \ref{fig:STS-B} shows that the mean intrinsic dimension for STS-B remains consistent (from $13.19$ to $13.01$), albeit with a change in their profiles as shown in Figure \ref{fig:RTE} and Figure \ref{fig:STS-B}. Additionally, we note similarities in the behavior of different layers: the lower layers, responsible for basic features (such as syntax and grammar), remain largely unchanged; however, the higher layers, which capture more complex features, exhibit significant compression. The intermediate layers, as indicated by recent studies on the geometry of hidden representations, show a slight increase in their capacity due to the model's specialization in the semantics of the intermediate task.
Thus, the decrease in the mean intrinsic dimensions corresponds to a reduction in the lower bounds presented in Corollary \ref{corollary}. This loosening of the bounds indicates that the number of parameters required for optimal performance has decreased, leading to more efficient training.

\begin{figure}[H]
    \centering
    % First subfigure
    \begin{subfigure}[t]{0.48\textwidth}
        \centering
        \includegraphics[width=\textwidth]{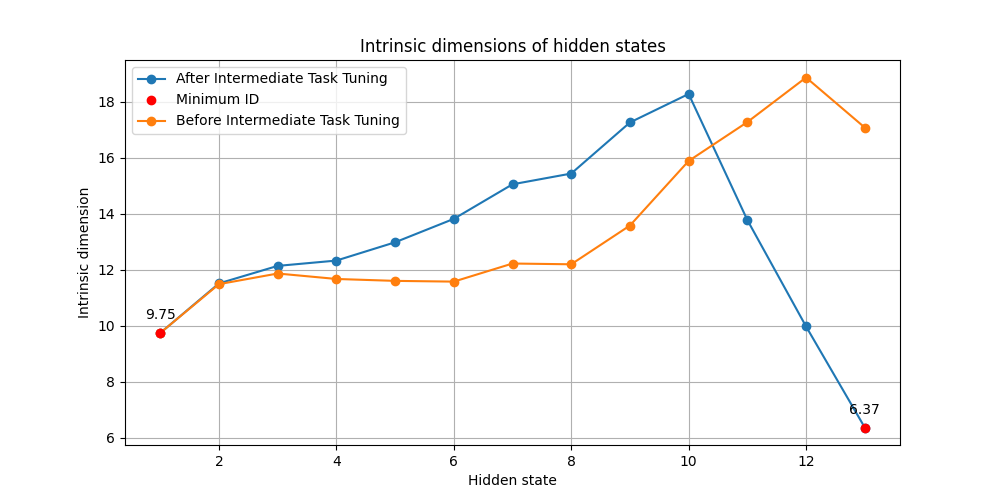}
        \caption{Intrinsic dimension profile of the RTE dataset using DebertaV3 before and after intermediate task tuning using MRPC.}
        \label{fig:RTE}
    \end{subfigure}
    \hfill
    % Second subfigure
    \begin{subfigure}[t]{0.48\textwidth}
        \centering
        \includegraphics[width=\textwidth]{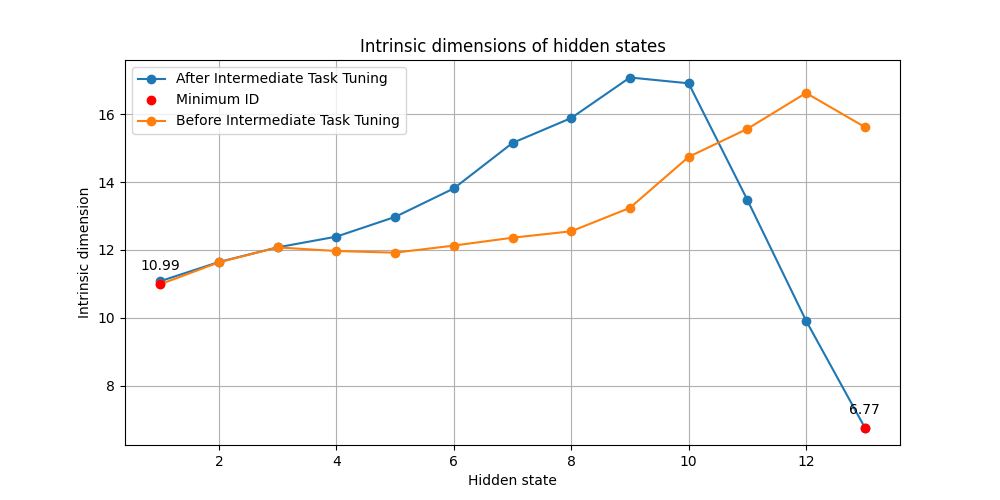}
        \caption{Intrinsic dimension profile of the STS-B dataset using DebertaV3 before and after intermediate task tuning using MRPC.}
        \label{fig:STS-B}
    \end{subfigure}
    % Main caption
    \caption{Intrinsic dimension profiles of RTE and STS-B datasets using DebertaV3 before and after intermediate task tuning using MRPC.}
    \label{fig:combined}
\end{figure}

Finally, we evaluate the efficiency of different techniques pertaining to the same budget constraint. We measured the clock time for training across six datasets, conducting experiments for $20$ epochs on all datasets except for RTE, which was run for $50$ epochs. All experiments were executed on identical computing infrastructure, using eight \textsc{NVIDIA A100-SXM4 GPUs} with a consistent batch size of $32$. To ensure a fair comparison between different techniques, we adjusted the ranks of LoRA and its variants to match the rounded mean rank of GeLoRA. 

Table \ref{Run_Time} reveals that GeLoRA demonstrates superior performance while incurring less computational overhead compared to the other techniques. In contrast, the SoRA method experiences additional computational overhead during training due to the gradient calculations required for the proximal gradient approach used to enforce sparsity via the $l_0$ norm. On the other hand, BitFit requires training the task-specific head for better performance which adds complexity to the method.

\begin{table}[H]
\caption{Training computational cost (runtime) in seconds for DeBERTaV3-base fine-tuning on GLUE tasks. The runtime for each fine-tuning is indicated in seconds. The best results for each dataset are highlighted in \textbf{bold}.}
\label{Run_Time}
\begin{center}
\resizebox{\columnwidth}{!}{
\begin{tabular}{l|rrrr|r|rr}
\multicolumn{1}{c}{\bf Dataset} & \multicolumn{1}{c}{\bf GeLoRA} &  \multicolumn{1}{c}{\bf SoRA}  &  \multicolumn{1}{c}{\bf LoRA} & \multicolumn{1}{c}{\bf AdaLoRA}  &  \multicolumn{1}{c}{\bf BitFit}  & \multicolumn{1}{c}{\bf HAdapter} & \multicolumn{1}{c}{\bf PAdaper}\\ 
\toprule
CoLA & $\mathbf{85.68\pm2.27}$ & $159.42\pm0.80$   & $100.95\pm10.53$ & $165.43\pm0.28$ & $157.27\pm1.07$ & $117.98\pm0.07$ & $113.52\pm0.11$\\
STS-B & $\mathbf{59.13\pm3.26}$ & $116.19\pm0.50$ & $78.26\pm6.92$  &$157.50\pm8.36$& $122.68\pm0.40$ & $84.51\pm0.06$  & $81.27\pm0.04$ \\
MRPC & $\mathbf{40.42\pm0.30}$ & $86.03\pm1.13$  & $58.75\pm1.73$  & $112.61\pm1.36$ & $94.93\pm0.34$ & $57.41\pm0.10$& $55.09\pm0.03$\\
QNLI & $\mathbf{736.57\pm3.34}$ & $1617.92\pm1.94$  &  $865.76\pm4.11$ & $2328.60\pm24.81$ & $1341.47\pm21.03$ & $1254.14\pm1.21$ & $1205.86\pm1.83$  \\
SST-2 & $\mathbf{475.58\pm5.10}$ & $1041.62\pm1.82$ & $482.38\pm5.11$ & $1140.65\pm2.25$ & $871.10\pm 5.05$ & $807.91\pm0.57$ &$775.33\pm0.56$  \\
RTE & $\mathbf{75.62\pm0.29}$ & $158.25\pm1.43$  & $116.28\pm7.30$  & $207.89\pm4.42$ & $80.5\pm0.24$ & $104.38\pm0.06$ & $100.40\pm0.11$\\
\midrule
\textbf{Average}&  $\mathbf{245.5}$ & $529.91$ & $283.73 $ & $685.45$ & $444.66$ & $404.39$ & $388.58$ \\
\bottomrule
\end{tabular}
}
\end{center}
\end{table}
\vspace{-0.6cm}

\subsubsection{Question Answering: SQuAD}
Our experimental results demonstrate the efficiency of GeLoRA against baseline approaches on SQuADv1.1 and SQuADv2.0 benchmarks. GeLoRA achieves state-of-the-art performance with EM/F1 scores of 86.72/92.84 and 83.15/86.25 respectively, surpassing other fine-tuning techniques while using only a fraction of trainable parameters.
Table \ref{res_squad} reveals consistent performance improvements over existing parameter-efficient methods. GeLoRA outperforms LoRA variants by margins of 0.45-2.27 points in EM score on SQuADv1.1, with similar gains observed on SQuADv2.0. The performance delta is more pronounced when compared to adapter-based methods, showing improvements of 2.14 and 3.72 points over HAdapter and PAdapter respectively on the SQuAD v1.1 dataset.
\begin{table}[H]
\centering
\caption{Results with DeBERTaV3-base on SQuADv1.1 and SQuADv2.0. Here \# Params is the number of trainable parameters. We report both the exact match and F1-score. The best results in each setting are shown in bold.}
\resizebox{0.8\columnwidth}{!}{%
\begin{tabular}{lrrcccc}
    & \textbf{\# Params} & \multicolumn{2}{c}{\textbf{SQuADv1.1}} & \multicolumn{2}{c}{\textbf{SQuADv2.0}} \\
    & & \textbf{EM} & \textbf{F1} & \textbf{EM} & \textbf{F1} \\
    \midrule
    Full FT & $183.83$M & $86.12\pm0.28$ & $92.68\pm0.13$ & $83.03\pm0.49$ & $86.21\pm0.51$ \\
    \midrule
    HAdapter & $0.06$M & $84.58\pm0.20$ & $91.57\pm0.13$ & $80.79\pm1.10$ & $84.28\pm1.13$ \\
    PAdapter & $0.03$M & $83.00\pm0.06$ & $90.57\pm0.10$ & $78.17\pm0.95$ & $81.94\pm0.94$ \\
    $\text{LoRA}_{r=2}$ & $0.01$M & $84.45\pm0.35$ & $91.35\pm0.25$ & $\mathbf{83.15\pm0.77}$ & $86.16\pm0.74$ \\
    $\text{LoRA}_{r=1}$ & $7e^{-3}$M&  $86.23\pm0.16$ & $92.51\pm0.16$ & $81.09\pm0.66$ & $84.22\pm0.63$ \\
    $\text{AdaLoRA}_{r=1}$ & $0.15$M&   &  &   $81.12\pm0.35$ &  $84.23\pm0.31$\\
    $\text{AdaLoRA}_{r=2}$ & $0.22$M&  $86.27\pm0.31$ &  $92.61\pm0.27$ & $81.68\pm0.51$   &  $84.80\pm0.50$\\
    \midrule
    GeLoRA & $8e^{-3}$M & $\mathbf{86.72\pm0.27}$ &$\mathbf{92.84\pm0.20}$ & $\mathbf{83.15\pm0.22}$  & $\mathbf{86.25\pm0.24}$\\
    \bottomrule
\end{tabular}
}
\label{res_squad}
\end{table}

\section{Conclusion and Future Work}
In this work, we introduced GeLoRA, a theoretically grounded technique designed for the efficient fine-tuning of large language models. GeLoRA effectively addresses the expressivity-efficiency trade-off inherent in low-rank adaptation techniques. Our approach is straightforward yet powerful, supported by theoretical analyses that ensure an optimal balance between expressivity and computational efficiency. We theoretically demonstrated that the number of parameters requiring optimization per transformer block is lower bounded by the difference in the intrinsic dimensions of the corresponding input and output hidden representations. This finding provides a method for estimating the optimal ranks for low-rank adaptation techniques, and connecting the manifold of data representations to the manifold of model parameters. Empirically, our methodology surpasses current state-of-the-art approaches on the GLUE benchmarks while maintaining computational efficiency. Additionally, GeLoRA offers a potential theoretical justification for the effectiveness of intermediate task tuning in certain scenarios.
However, we acknowledge that our technique shifts some computational overhead to the preprocessing step and relies on a local estimator of intrinsic dimensions, specifically the Two Nearest Neighbors (TwoNN) method. We believe this aspect can be further improved through the application of persistent homology dimensions to estimate intrinsic dimensions, as this approach considers both local and global topological features of the manifold. Moreover, it can be computed efficiently on GPUs by leveraging parallelization.

\bibliographystyle{unsrtnat}
\bibliography{references} 

\appendix
\newpage
\section{Mathematical Formalism}
In this section, we provide the mathematical definitions, theorems, and algorithms that serve as the foundation for the methods and theorems presented in this paper.
\subsection{Intrinsic Dimensionality}
\subsubsection{Definition and Example}
\begin{definition}[\textbf{Intrinsic Dimensionality}]
Let $ \mathcal{M} \subseteq \mathbb{R}^D $ be a manifold embedded in a $ D $-dimensional ambient space. The \textit{intrinsic dimensionality (ID)} of $ \mathcal{M} $ is defined as the smallest number of coordinates $ d $ such that all data points on $ \mathcal{M} $ can be locally approximated by a $ d $-dimensional Euclidean space. Formally, for every point $ \mathbf{x} \in \mathcal{M} $, there exists a neighborhood $ \mathcal{N}(\mathbf{x}) $ and a smooth map $ \phi: \mathbb{R}^d \to \mathbb{R}^D $ such that $ \phi(\mathbb{R}^d) \cap \mathcal{N}(\mathbf{x}) = \mathcal{M} \cap \mathcal{N}(\mathbf{x}) $.

In practical terms, the intrinsic dimensionality $ d $ represents the number of degrees of freedom required to describe the structure of $ \mathcal{M} $, regardless of the ambient space's dimensionality $ D $.
\end{definition}

\paragraph{Example.}
Consider a helical curve $ \mathcal{H} $ embedded in three-dimensional space ($ \mathbb{R}^3 $) (Figure \ref{fig:helical}) with the parametric representation:
\[
\mathbf{x}(t) = 
\begin{bmatrix}
    r \cos(t) \\
    r \sin(t) \\
    c t
\end{bmatrix}, \quad t \in \mathbb{R},
\]
where $ r > 0 $ is the radius and $ c > 0 $ is the vertical scaling factor.

\begin{itemize}
    \item Although the helix is embedded in $ \mathbb{R}^3 $, the parameter $ t $ uniquely determines any point on the curve.
    \item Hence, the helix is a one-dimensional ($ d=1 $) manifold, since it can be locally approximated by a one-dimensional Euclidean space ($ \mathbb{R}^1 $).
\end{itemize}

\begin{figure}[H]
    \centering
    \includegraphics[width=0.5\linewidth]{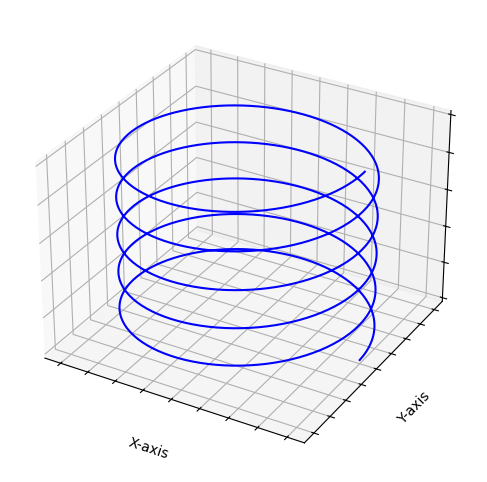}
    \caption{A helical curve in 3D space with an intrinsic dimension of 1, fully described by a single parameter despite its 3D embedding.}
    \label{fig:helical}
\end{figure}

\subsubsection{Methodology: Two Nearest Neighbors Estimator}
\paragraph{Choice Justification.} The \textit{Two Nearest Neighbor} (TwoNN) intrinsic dimension (ID) estimator, proposed by \cite{TwoNN}, uses local geometric properties to estimate the intrinsic dimension of datasets. It is particularly well-suited for NLP datasets because:
\begin{itemize}
    \item \textbf{Scalability:} The estimator efficiently computes the ID using only distances to the first two nearest neighbors of each point, even for large datasets.
    \item \textbf{Robustness:} It produces consistent results across dataset scales \cite{Denti2022-kt}.
    \item \textbf{Validity:} The assumption of local constant density, crucial for the TwoNN method, is satisfied in NLP datasets, as validated using the Point Adaptive kNN (PAk) method \citep{Rodriguez2018-jm, geometryofhiddenrepr}.
\end{itemize}

\paragraph{Methodology.} The TwoNN estimator follows these steps:
\begin{enumerate}
    \item \textbf{Nearest Neighbor Distances:} For each data point $x_i$, compute:
    \begin{itemize}
        \item $r_{i1}$: Distance to its first nearest neighbor.
        \item $r_{i2}$: Distance to its second nearest neighbor.
    \end{itemize}
    \item \textbf{Compute Ratios:} Calculate the ratio $\mu_i = \frac{r_{i2}}{r_{i1}}$ for each point.
    \item \textbf{Pareto Distribution Assumption:} The ratios $\mu_i$ follow a Pareto distribution $p(\mu_i | d) = d \mu_i^{-d-1}$, where $d$ is the intrinsic dimension.
    \item \textbf{Cumulative Distribution Function (CDF):} The Pareto CDF is given by $F(\mu) = 1 - \mu^{-d}$. The empirical CDF is approximated as:
    \[
    F_{\text{emp}}(\mu_{\sigma(i)}) = \frac{i}{N},
    \]
    where $\mu_{\sigma(i)}$ are the sorted values of $\mu_i$ in ascending order and $N$ is the total number of points.
    \item \textbf{Linear Regression:} By plotting $\log(\mu_{\sigma(i)})$ against $-\log(1 - F_{\text{emp}}(\mu_{\sigma(i)}))$, the slope of the line gives the intrinsic dimension $d$.
\end{enumerate}

To ensure robustness, the TwoNN estimator is applied to random subsets of the dataset of decreasing sizes (e.g., $N, N/2, N/4, \dots$), and the ID is chosen where the estimates stabilize.

\paragraph{Hands-on Example: Manual TwoNN Calculation.}
Consider a toy dataset of five points in two dimensions, with coordinates:
\[
x_1 = (0, 0), \, x_2 = (1, 0), \, x_3 = (2, 0), \, x_4 = (0, 1), \, x_5 = (2, 2).
\]

\begin{itemize}
    \item \textbf{Step 1: Compute Nearest Neighbor Distances}  
For each point $x_i$, compute distances to all other points. For example:
\[
\text{For } x_1: \, r_{12} = 1, \, r_{13} = 2, \, r_{14} = 1, \, r_{15} \approx 2.83.
\]
The nearest neighbors of $x_1$ are $r_{i1} = 1$ and $r_{i2} = 1$.
\item \textbf{Step 2: Compute Ratios}  
For each point $x_i$, calculate $\mu_i = \frac{r_{i2}}{r_{i1}}$. For example:
\[
\mu_1 = \frac{r_{i2}}{r_{i1}} = \frac{2}{1} = 2, \quad \mu_2 \approx 1.41.
\]

\item \textbf{Step 3: Sort Ratios and Compute Empirical CDF}  
Sort the ratios $\mu_i$ in ascending order:
\[
\mu_{\sigma(1)} \leq \mu_{\sigma(2)} \leq \dots \leq \mu_{\sigma(N)}.
\]
The empirical CDF is:
\[
F_{\text{emp}}(\mu_{\sigma(i)}) = \frac{i}{N}.
\]

\item \textbf{Step 4: Linear Regression to Estimate $d$}  
Take the logarithm of the sorted ratios:
\[
\log(\mu_{\sigma(i)}), \quad \text{and } -\log(1 - F_{\text{emp}}(\mu_{\sigma(i)})).
\]
Plot $\log(\mu_{\sigma(i)})$ vs. $-\log(1 - F_{\text{emp}}(\mu_{\sigma(i)}))$ and fit a straight line through the origin. The slope of the line is the intrinsic dimension $d$.
\end{itemize}

\paragraph{Computational Complexity.} 
The TwoNN estimator requires finding the two nearest neighbors for each data point in the dataset. This operation has a computational complexity of $O(n^2)$ for a naive approach or $O(n \log(n))$ when using optimized nearest neighbor search methods (e.g., KD-trees or ball trees). For a dataset with $n$ points and a model with $L$ transformer blocks (e.g., where distances need to be computed across $L$ hidden representations), the overall complexity becomes:

\[
O(L \cdot n \log(n)) \quad \text{or} \quad O(L \cdot n^2),
\]

depending on the algorithm used for nearest neighbor computation.

\subsection{Transformer Architecture}
\begin{definition}[\textbf{Single-head Self-attention Layer}]
     Let $k, d \in \mathbb{N}$. Consider matrices $Q, K, V \in \mathbb{R}^{k \times d}$. For any integer $n \in \mathbb{N}$ and vectors $x_1, \ldots, x_n \in \mathbb{R}^d$, self-attention with parameters $(Q, K, V)$ maps the sequence  $(x_1, \ldots, x_n) \in \mathbb{R}^{d\times n}$ to 

\begin{equation}
f(x_1, \ldots, x_n) = \left( V \sum_{j=1}^n\text{softmax} \left( \frac{x_i^\top Q^\top K x_j}{\sqrt{k}} \right)  x_j \right)_{1 \leq i \leq n} \in (\mathbb{R}^k)^n,
\end{equation}
\end{definition}

\begin{definition}[\textbf{Multi-head Self Attention Layer}]
 Let $d \in \mathbb{N}$ and $H$ be a divisor of $d$. For $1 \leq h \leq H$, let $Q(h), K(h), V(h) \in \mathbb{R}^{k \times d}$ with $k := d/H$, and $W(h) \in \mathbb{R}^{d \times k}$. Multi-head self-attention with parameters $(Q(h), K(h), V(h), W(h))_{1 \leq h \leq H}$ maps any sequence $(x_1, \ldots, x_n) \in (\mathbb{R}^d)^n$ to 

\begin{equation}
f_{\text{MH}}(x_1, \ldots, x_n) = \sum_{h=1}^H W(h) f^{(h)}(x_1, \ldots, x_n) \in (\mathbb{R}^d)^n,
\end{equation}

where $f^{(h)}$ denotes single-head self-attention with parameters $(Q(h), K(h), V(h))$.
   
\end{definition} 

\section{GeLoRA: Framework and Theoretical Proofs}  
In this section, we provide the pseudocode for the GeLoRA framework along with detailed proofs of the theorems presented in this paper. 
\subsection{GeLoRA: Pseudocode}
\label{appendix: pseudo}
\begin{algorithm}[H]
\caption{GeLoRA (Geometry-aware Low-Rank Adaptation)}
\begin{algorithmic}[1]
\Require 
    \State Model $\mathcal{M}$ with $L$ transformer layers
    \State Dataset $\mathcal{D}$
    \State Desired constant ratio $c$ for $\alpha_i/r_i$
\Ensure 
    \State Optimal LoRA ranks $r_i$ and scaling factors $\alpha_i$ for each layer

\Function{EstimateIntrinsicDimension}{$\mathbf{X}$}
    \For{each point $x_j$ in $\mathbf{X}$}
        \State $r_1(j) \gets$ distance to nearest neighbor of $x_j$
        \State $r_2(j) \gets$ distance to second nearest neighbor of $x_j$
        \State $\mu_j \gets r_2(j)/r_1(j)$
    \EndFor
    
    \State // Fit empirical distribution to theoretical CDF: $F(\mu|d) = 1 - \mu^{-d}$
    \State $d \gets \text{FitDistribution}(\{\mu_j\})$ \Comment{Using log-log regression or MLE}
    \State \Return $d$
\EndFunction

\Function{ComputeGeLoRAParameters}{$\mathcal{M}, \mathcal{D}$}
    \State // Initialize arrays for dimensions and parameters
    \State $\mathbf{d} \gets \text{array of size } L+1$ \Comment{Intrinsic dimensions}
    \State $\mathbf{r} \gets \text{array of size } L$ \Comment{LoRA ranks}
    \State $\boldsymbol{\alpha} \gets \text{array of size } L$ \Comment{Scaling factors}
    
    \State // Step 1: Compute intrinsic dimensions for each layer
    \For{$i \gets 0$ to $L$}
        \State $\mathbf{X}_i \gets \text{GetHiddenStates}(\mathcal{M}, \mathcal{D}, \text{layer}=i)$
        \State $d_i \gets \text{EstimateIntrinsicDimension}(\mathbf{X}_i)$
    \EndFor
    
    \State // Step 2: Compute ranks for each layer
    \For{$i \gets 0$ to $L-1$}
        \State $\text{dim\_difference} \gets \max(d_{i+1} - d_i, 0)$
        \State $\text{base\_rank} \gets \text{dim\_difference} + 1$ \Comment{Add offset of 1}
        
        \State // Set equal ranks for all matrices in transformer block
        \State $r_{K_i}, r_{Q_i}, r_{V_i},r_{O_i}  \gets \text{base\_rank}$
    \EndFor
    
    \State // Step 3: Compute scaling factors maintaining $\alpha_i/r_i = \text{const}$
    \State $\text{total\_rank} \gets \sum_{i=0}^{L-1} r_i$
    \For{$i \gets 0$ to $L-1$}
        \State $\alpha_i \gets c \cdot r_i$ \Comment{Ensures $\alpha_i/r_i = c$}
    \EndFor
    
    \State \Return $\mathbf{r}, \boldsymbol{\alpha}$
\EndFunction

\State // Main execution
\Function{Main}{}
    \State $\text{model} \gets \text{LoadModel}()$
    \State $\text{dataset} \gets \text{LoadDataset}()$
    \State $\text{ranks}, \text{scaling\_factors} \gets \text{ComputeGeLoRAParameters}(\text{model}, \text{dataset})$
    \State $\text{ApplyLoRAParameters}(\text{model}, \text{ranks}, \text{scaling\_factors})$
\EndFunction
\end{algorithmic}
\end{algorithm}
\subsection{Mathematical Proofs}
\subsubsection{Proof of Theorem \ref{idim} -- Intrinsic Dimension as a Lower Bound}
\begin{theorem}[\textbf{Intrinsic Dimension as a Lower Bound}]
The intrinsic dimension $\hat{\text{idim}}(\phi)$ is a lower bound to the local dimensionality $d(\phi)$.
\[ d(\phi) \geq \hat{\text{idim}}(\phi). \]
\end{theorem}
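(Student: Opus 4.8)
The plan is to reduce the statement to a comparison of two counts of eigenvalues of the Fisher information matrix $\mathcal{I}(\phi)$. I would begin by writing the spectral decomposition $\mathcal{I}(\phi) = \sum_{k=1}^{p} \lambda_k\, v_k v_k^\top$ with $\lambda_1 \ge \lambda_2 \ge \cdots \ge \lambda_p \ge 0$ and orthonormal eigenvectors $v_k$. By the definition of local dimensionality, $d(\phi) = \text{rank}(\mathcal{I}(\phi)) = \#\{k : \lambda_k > 0\}$, i.e. the number of directions in parameter space along which the model distribution $\mathbb{P}_\phi(\cdot\mid\cdot)$ genuinely changes.

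Next I would make precise the informal notion of intrinsic dimension as an \emph{effective rank at the resolution scale} $\epsilon$ that is already introduced in the discussion preceding the theorem. Fixing this scale $\epsilon>0$, the Cram\'er--Rao bound shows that any unbiased estimator of the coordinate $\langle \phi, v_k\rangle$ has variance at least $1/\lambda_k$; hence a direction with $\lambda_k \le \epsilon$ cannot be resolved below variance $1/\epsilon$ and is statistically indistinguishable from a flat, uninformative direction of the loss landscape — exactly the situation depicted in Figure \ref{fig: local_dimensionality}. The intrinsic dimension is therefore the number of directions carrying non-negligible local information, namely $\hat{\text{idim}}(\phi) = \#\{k : \lambda_k > \epsilon\}$, and since $\{k : \lambda_k > \epsilon\} \subseteq \{k : \lambda_k > 0\}$ the inequality $\hat{\text{idim}}(\phi) \le d(\phi)$ follows immediately. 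To connect this spectral statement to the manifold language used later, I would add a short geometric remark: locally around $\phi$ the neuromanifold $\mathcal{M}$ is, to first order, the image of a chart whose differential is controlled by $\mathcal{I}(\phi)$, so its tangent space has dimension $d(\phi)$; the best $\hat{\text{idim}}(\phi)$-dimensional affine approximation is the projection onto $\mathrm{span}(v_1,\dots,v_{\hat{\text{idim}}(\phi)})$, which is precisely what a local estimator such as TwoNN recovers, and the rank of an orthogonal projection onto a subspace of the tangent space cannot exceed the dimension of that tangent space.

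The main obstacle is not the final one-line inequality but the formalization that precedes it. The empirical FIM is almost surely of full rank $p$ (as noted via \cite{FENG2007689}), so the bound is vacuous unless one consistently works with the thresholded or population object and argues — using a Weyl-type perturbation bound on the eigenvalues of $\mathcal{I}(\phi)$ together with the Cram\'er--Rao argument above — that the quantity returned by a local intrinsic-dimension estimator is exactly the count $\#\{k:\lambda_k>\epsilon\}$ and that this count is stable under the small perturbations introduced by finite sampling. Pinning down this correspondence precisely, and verifying that it transfers to the data-representation manifolds $\mathcal{N}_i$ appearing in Theorem \ref{rank_theorem}, is where the genuine care is required; once it is in place, the theorem is an immediate set-inclusion consequence.
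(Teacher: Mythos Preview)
Your proposal is correct and follows essentially the same approach as the paper: both arguments identify $d(\phi)$ with the count of strictly positive eigenvalues of $\mathcal{I}(\phi)$, identify $\hat{\text{idim}}(\phi)$ with the count of eigenvalues exceeding a threshold $\epsilon>0$ (justified via the Cram\'er--Rao bound), and conclude by the set inclusion $\{\lambda_k>\epsilon\}\subseteq\{\lambda_k>0\}$. Your write-up is more explicit (spectral decomposition, the geometric tangent-space remark, the discussion of perturbation stability and the link to TwoNN), but these are elaborations on, not departures from, the paper's argument.
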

\begin{proof}
The local dimensionality $ d(\phi)$ of a neuromanifold is defined as the rank of the Fisher Information Matrix (FIM), which corresponds to the number of non-zero eigenvalues of the FIM. However, in practice, while the FIM is almost surely of full rank, many of its eigenvalues can be exceedingly small, on the order of $ \epsilon \in \mathbb{R}^+ $, where $ \epsilon $ is a small positive threshold. 

According to the Cramér-Rao bound, the variance of parameter estimates is inversely proportional to the eigenvalues of the FIM. Specifically, for an eigenvalue on the order of $\epsilon$, the variance of the corresponding parameter is at least $ 1 / \epsilon $. Parameters associated with such small eigenvalues contribute negligible information about the model and can therefore be considered effectively uninformative.

By disregarding parameters associated with small eigenvalues, we obtain the definition of the intrinsic dimension $\text{idim}(\phi) $, which represents the minimal number of parameters necessary to describe the structure of the manifold. The specific value of the intrinsic dimension depends on the threshold $ \epsilon $ used to exclude eigenvalues below a certain magnitude. This threshold determines the uninformative directions that are discarded, yielding an estimate of the ground truth intrinsic dimension. Consequently, the estimated intrinsic dimension $ \hat{\text{idim}}(\phi) $ is always less than or equal to the local dimensionality $ d(\phi) $:
\[
\hat{\text{idim}}(\phi) \leq d(\phi).
\]

Thus, the estimated intrinsic dimension $ \hat{\text{idim}}(\phi) $ provides a lower bound for the local dimensionality $ d(\phi) $, completing the proof.
\end{proof}
\subsubsection{Proof of Theorem \ref{rank_theorem} -- Rank Bound of Transformer Blocks}
\begin{theorem}[\textbf{Rank Bound of Transformer Blocks}]
Let $\mathcal{M}$ denote a language model consisting of $N$ transformer blocks. For each $i \in \{1, 2, \dots, N\}$, the $i$-th transformer block is represented by $\mathcal{T}_i : \mathbb{R}^{n_{i-1}} \times \mathbb{R}^{p_{i-1}} \to \mathbb{R}^{n_i}$, which maps the hidden state $\mathcal{H}_{i-1} \subset \mathbb{R}^{n_{i-1}}$ and parameters $\theta_{i-1} \in \mathbb{R}^{p_{i-1}}$ to the next hidden state $\mathcal{H}_i \subset \mathbb{R}^{n_i}$. Assume that the hidden state $\mathcal{H}_i$ lies on a manifold $\mathcal{N}_i$ with intrinsic dimension $d_i$ embedded in $\mathbb{R}^{n_i}$, while $\mathcal{H}_{i-1}$ lies on a manifold $\mathcal{N}_{i-1}$ with intrinsic dimension $d_{i-1}$ embedded in $\mathbb{R}^{n_{i-1}}$. The rank of the transformer block $\mathcal{T}_i$ is constrained by the inequality
\begin{equation*}
d_i  \leq \text{rank}(\mathcal{T}_i),
\end{equation*}
where the rank of $\mathcal{T}_i$ at $\theta_{i-1}$ is defined as $\text{rank}(\mathcal{T}_i) = \max_{x \in \mathcal{H}_{i-1}} \text{rank}(J(\mathcal{T}_i, x, \theta_{i-1})),$ with $J(\mathcal{T}_i, x, \theta_{i-1})$ representing the Jacobian matrix of $\mathcal{T}_i$ evaluated at $x \in \mathcal{H}_{i-1}$ and $\theta_{i-1}$.
\end{theorem}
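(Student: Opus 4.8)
The plan is to fix the parameters $\theta_{i-1}$ and treat the transformer block as a smooth map $g := \mathcal{T}_i(\cdot,\theta_{i-1})\colon \mathbb{R}^{n_{i-1}} \to \mathbb{R}^{n_i}$ whose restriction to $\mathcal{H}_{i-1}$ surjects onto $\mathcal{H}_i$. Every constituent operation of a transformer block -- affine maps, the softmax, layer normalization, and residual additions -- is smooth on the relevant domain (hidden states avoid the measure-zero locus where layer normalization is singular), so $g$ is smooth, and its restriction $g|_{\mathcal{N}_{i-1}}$ to the smooth manifold $\mathcal{N}_{i-1}$ is a smooth map of manifolds. The statement then reduces to the following general principle: \emph{if $F\colon P \to \mathbb{R}^m$ is a smooth map on a manifold $P$ with $\text{rank}(dF_x)\le r$ for every $x\in P$, then the image $F(P)$ has topological (equivalently Hausdorff) dimension at most $r$.} Applying this with $P=\mathcal{N}_{i-1}$, $F=g|_{\mathcal{N}_{i-1}}$, and $r=\max_{x\in\mathcal{H}_{i-1}}\text{rank}\!\left(dg_x|_{T_x\mathcal{N}_{i-1}}\right)$, and using $\mathcal{H}_i\subseteq g(\mathcal{H}_{i-1})\subseteq g(\mathcal{N}_{i-1})$ together with the modelling hypothesis that $\mathcal{H}_i$ genuinely exhausts the $d_i$-dimensional manifold $\mathcal{N}_i$ (so $\dim F(P)\ge d_i$), we obtain $d_i\le r$. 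Finally, since the differential of the restriction is the restriction of the full Jacobian to a subspace, $r\le \max_x \text{rank}\!\left(J(\mathcal{T}_i,x,\theta_{i-1})\right)=\text{rank}(\mathcal{T}_i)$, which gives $d_i\le\text{rank}(\mathcal{T}_i)$.

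To establish the general principle I would stratify $P$ by the rank of $dF$. The function $x\mapsto \text{rank}(dF_x)$ is lower semi-continuous, so the set $U_r=\{x\in P:\text{rank}(dF_x)=r\}$ of points of maximal rank is open, and on $U_r$ the rank is constant equal to $r$. The constant-rank theorem then shows that each point of $U_r$ has a neighborhood whose image under $F$ is an embedded submanifold of dimension exactly $r$; covering $U_r$ by countably many such neighborhoods exhibits $F(U_r)$ as a countable union of $r$-dimensional pieces, hence $\dim F(U_r)\le r$. On the complement $P\setminus U_r$ the rank is strictly smaller, and applying the same argument inductively down the rank strata -- or, more cleanly, invoking a Sard-type statement bounding the dimension of the image of the set where $\text{rank}(dF)$ is low -- gives $\dim F(P\setminus U_r)\le r-1$. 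Taking the union and using countable stability of topological dimension yields $\dim F(P)\le r$.

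The main obstacle is this second step: pinning down a clean, citable form of the ``image dimension is bounded by the generic rank'' fact that covers maps which are not of globally constant rank, and carrying out the induction on the rank strata rigorously, since the strata need not themselves be smooth submanifolds (so one must either smooth them or route through a measure-theoretic Sard-type argument rather than a purely manifold-theoretic one). A secondary point requiring care is making precise the assumption that $\mathcal{H}_i$ is ``$d_i$-dimensional'' rather than merely contained in a $d_i$-dimensional manifold; I would state this explicitly -- for instance that $\mathcal{H}_i$ has nonempty interior in $\mathcal{N}_i$, or that its topological dimension equals $d_i$ -- so that the inclusion $\mathcal{H}_i\subseteq F(P)$ genuinely forces $\dim F(P)\ge d_i$. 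With these conventions fixed, the argument above closes the gap between the intrinsic dimension of the output representation and the maximal Jacobian rank of the block.
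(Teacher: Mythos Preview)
Your approach is essentially the same as the paper's: both fix $\theta_{i-1}$, compose with a local parameterization of $\mathcal{N}_{i-1}$, invoke the constant rank theorem on the generic stratum (the paper relegates the non-constant-rank issue to a footnote citing Sard's theorem, which is exactly your rank-stratification step), and then use the chain rule to bound the rank of the composite by the rank of the full Jacobian $J(\mathcal{T}_i,x,\theta_{i-1})$. If anything, your treatment is more careful than the paper's --- you explicitly flag the need to assume $\mathcal{H}_i$ genuinely has dimension $d_i$ and to handle the lower-rank strata, both of which the paper glosses over.
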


\begin{proof}
    Let $i \in \{1, 2, \dots, N\}$, and consider the map $\mathcal{T}_i : \mathbb{R}^{n_{i-1}} \times \mathbb{R}^{p_{i-1}} \to \mathbb{R}^{n_i}$ to be the $i$-th transformer block, which maps the hidden state $\mathcal{H}_{i-1} \subset \mathbb{R}^{n_{i-1}}$ and parameters $\theta_{i-1} \in \mathbb{R}^{p_{i-1}}$ to the next hidden state $\mathcal{H}_i \subset \mathbb{R}^{n_i}$. Assume that $\text{idim}(\mathcal{H}_{i-1}) = d_{i-1}$ and $\text{idim}(\mathcal{H}_i) = d_i$.

Given that $\text{idim}(\mathcal{H}_{i-1}) = d_{i-1} \leq n_{i-1}$, we can define a smooth bijective parameterization $\phi : \mathcal{U} \to \mathbb{R}^{n_{i-1}}$ from an open set $\mathcal{U} \subset \mathbb{R}^{d_{i-1}}$ to an open subset $\mathcal{O} \subset \mathcal{H}_{i-1}$. We now extend this parameterization to include the parameters $\theta_{i-1} \in \mathbb{R}^{p_{i-1}}$ by considering the map $\psi : \mathcal{U} \to \mathbb{R}^{n_{i-1}} \times \mathbb{R}^{p_{i-1}}$ that maps each point $x \in \mathcal{U}$ to $(x, \theta_{i-1})$. 

Since $\mathcal{T}_i$ is smooth almost everywhere, we can apply the constant rank theorem \footnote{By Sard's Theorem \citep{guillemin2010differential}, critical points—where the Jacobian rank is lower—map to a set of measure zero. These regions of lower ranks contribute negligibly to the representation manifolds. Therefore, we can disregard them and focus only on regions where the rank is constant and maximal.} for manifolds to the composed map $\mathcal{T}_i \circ \psi$, obtaining:
\begin{equation*}
\text{idim}(\mathcal{T}_i(\mathcal{H}_{i-1})) = \text{rank}(\mathcal{T}_i \circ \psi) = \text{rank}(J_{\mathcal{T}_i \circ \psi}),
\end{equation*}
where $J_{\mathcal{T}_i \circ \psi}$ is the Jacobian matrix of the composition $\mathcal{T}_i \circ \psi$.

Using the chain rule, the rank of the composition is bounded by the minimum rank of the individual Jacobians:
\begin{equation*}
\text{rank}(J_{\mathcal{T}_i \circ \psi}) = \text{rank}(J_{\mathcal{T}_i} \cdot J_{\psi}) \leq \text{rank}(J_{\mathcal{T}_i})
\end{equation*}

Thus, the dimension of $\mathcal{T}_i(\mathcal{H}_{i-1})$, which corresponds to the intrinsic dimension $d_i$ of the hidden state $\mathcal{H}_i$, satisfies:
\begin{equation*}
d_i = \text{idim}(\mathcal{H}_i) \leq \text{rank}(\mathcal{T}_i).
\end{equation*}
This completes the proof.
\end{proof}

\subsubsection{Proof of Corollary \ref{corollary} -- Bound on Parameters for Transformer Block Optimization}
\begin{corollary}[\textbf{Bound on Parameters for Transformer Block Optimization}]
Let \( N_{i-1} \) represent the number of parameters required to optimize at transformer block \( i \). Then, the following inequality holds:
\[
\max(d_i - d_{i-1}, 0) \leq N_{i-1}.
\]
\end{corollary}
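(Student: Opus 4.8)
The plan is to combine Theorem~\ref{rank_theorem} with a block decomposition of the Jacobian of $\mathcal{T}_i$ into an input part and a parameter part, together with the subadditivity of matrix rank under horizontal concatenation.

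First I would fix $i$ and a point $x \in \mathcal{H}_{i-1}$ at which the Jacobian of $\mathcal{T}_i$ attains its maximal rank, and reuse the local parameterization $\phi : \mathcal{U} \to \mathcal{O} \subset \mathcal{H}_{i-1}$ from the proof of Theorem~\ref{rank_theorem}, with $\mathcal{U} \subset \mathbb{R}^{d_{i-1}}$ open. Differentiating the composed map $(u,\theta) \mapsto \mathcal{T}_i(\phi(u),\theta)$ and applying the chain rule yields a Jacobian of the block form $\big[\, J^{x}_{\mathcal{T}_i}\, J_\phi \;\big|\; J^{\theta}_{\mathcal{T}_i}\,\big]$, where $J^{x}_{\mathcal{T}_i}$ and $J^{\theta}_{\mathcal{T}_i}$ are the partial Jacobians of $\mathcal{T}_i$ with respect to the hidden state and the parameters respectively. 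Since the image of this restricted map is $\mathcal{T}_i(\mathcal{O}) \subset \mathcal{H}_i$, the constant rank theorem identifies $d_i$ with the rank of this concatenated matrix.

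Next I would invoke the elementary inequality $\text{rank}([A \mid B]) \le \text{rank}(A) + \text{rank}(B)$, giving
\[
d_i \;\le\; \text{rank}\big(J^{x}_{\mathcal{T}_i} J_\phi\big) + \text{rank}\big(J^{\theta}_{\mathcal{T}_i}\big).
\]
The first summand is at most $\text{rank}(J_\phi) \le d_{i-1}$, since $J_\phi$ has only $d_{i-1}$ columns; this is the tangent-space restriction that keeps the input's contribution at the intrinsic dimension of $\mathcal{H}_{i-1}$ rather than at the ambient dimension $n_{i-1}$. The second summand is bounded by $N_{i-1}$: by definition only $N_{i-1}$ coordinates of $\theta_{i-1}$ need be optimized at block $i$, so perturbing the remaining coordinates produces no attainable change in $\mathcal{H}_i$, those columns of $J^{\theta}_{\mathcal{T}_i}$ may be discarded, and the column rank is at most $N_{i-1}$. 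Combining the two bounds yields $d_i \le d_{i-1} + N_{i-1}$, hence $d_i - d_{i-1} \le N_{i-1}$, and since $N_{i-1} \ge 0$ we obtain $\max(d_i - d_{i-1},0) \le N_{i-1}$.

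The main obstacle I anticipate is making rigorous the step $\text{rank}(J^{\theta}_{\mathcal{T}_i}) \le N_{i-1}$: this requires a careful definition of "the number of parameters required to optimize at block $i$" and an argument that this effective parameter count dominates the column rank of the parameter-Jacobian uniformly over the relevant inputs — essentially, that every attainable perturbation of the output manifold can already be realized within an $N_{i-1}$-dimensional parameter subspace, so no smaller set can suffice. The remaining ingredients — the horizontal block-rank inequality, the bound $\text{rank}(J_\phi) \le d_{i-1}$, and the appeal to Theorem~\ref{rank_theorem} together with the constant rank theorem — are routine once this definition is pinned down.
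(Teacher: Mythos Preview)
Your proposal is correct and takes essentially the same approach as the paper: both decompose the rank of $\mathcal{T}_i$ into an input contribution (bounded by $d_{i-1}$) and a parameter contribution (identified with $N_{i-1}$), then combine these with Theorem~\ref{rank_theorem} to obtain $d_i \le d_{i-1} + N_{i-1}$. Your version makes the decomposition explicit via the block Jacobian $[\,J^{x}_{\mathcal{T}_i}J_\phi \mid J^{\theta}_{\mathcal{T}_i}\,]$ and the subadditivity of rank, whereas the paper phrases it informally as a sum of ``non-noisy directions'' at $\mathcal{H}_{i-1}$ and at $\theta_{i-1}$; the obstacle you flag about rigorously defining $N_{i-1}$ is exactly where the paper's own argument is heuristic as well.
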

\begin{proof}
We begin by considering the result from Theorem \ref{rank_theorem}, which asserts:
\begin{equation*}
d_i \leq \text{rank}(\mathcal{T}_i).
\end{equation*}
where \( \mathcal{T}_i \) is the transformation applied at block \( i \).

The rank of \( \mathcal{T}_i \), \( \text{rank}(\mathcal{T}_i) \), corresponds to the number of non-noisy directions in its input space, meaning $\theta_i$ and $x \in \mathcal{H}_{i-1}$. 

\begin{equation*}
   \text{rank}(\mathcal{T}_i)  = \text{Number of non-noisy directions at } \mathcal{H}_{i-1} + \text{Number of non-noisy directions at } \theta_i 
\end{equation*}

By the definition of intrinsic dimensionality, the number of non-noisy directions at \( \mathcal{H}_{i-1} \) is bounded by \( d_{i-1} \), the intrinsic dimensionality of \( \mathcal{H}_{i-1} \). Thus, we have:
\begin{equation*}
    \text{Number of non-noisy directions at } \mathcal{H}_{i-1} \leq d_{i-1}
\end{equation*}

Consequently, we can rewrite the inequality from Theorem \ref{rank_theorem} as follows:
\[
d_i \leq d_{i-1} + N_{i-1},
\]
where $N_{i-1}$ represents the number of non-noisy directions in the parameter space that needs to be optimized.

Since \( N_{i-1} \) represents the number of parameters to be optimized at block \( i \), and by definition \( N_{i-1} \geq 0 \), we conclude:
\[
\max(d_i - d_{i-1}, 0) \leq N_{i-1}.
\]
This completes the proof.
\end{proof}

\subsection{Intuitive Proof of Conjecture \ref{conjecture} -- Transformer Rank Bound Dynamics}

\begin{conjecture}[\textbf{Transformer Rank Bound Dynamics}]
    Let \( i \in \{1, 2, \dots, N\} \), and consider the process of fine-tuning. During this process, both the rank of each transformer block \( \text{rank}(\mathcal{T}_i) \) and the intrinsic dimension \( d_i \) of the manifold \( \mathcal{H}_i \) decrease. Let \( d_i^0 \) denote the initial intrinsic dimension. Then, the following inequality holds:
    \[
    d_i^0 \leq \text{rank}(\mathcal{T}_i^t),
    \]
    where \( \mathcal{T}_i^t \) represents the transformer block after the \( t \)-th gradient step. As fine-tuning progresses, this inequality becomes progressively tighter, implying that the gap between the initial intrinsic dimension and the rank of the transformer block reduces over time.
\end{conjecture}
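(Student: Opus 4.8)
Since the statement is recorded as a conjecture, the plan is not to produce a fully rigorous argument but to assemble a heuristic proof from three pieces — a per-iterate application of Theorem~\ref{rank_theorem}, a compression claim about the fine-tuning trajectory, and a semicontinuity property of the rank functional — and to be explicit about the one piece that resists rigor. First I would fix $i \in \{1,\dots,N\}$ and set up notation for the trajectory: let $\theta_{i-1}^t$ be the parameters of block $i$ after the $t$-th gradient step, $\mathcal{T}_i^t := \mathcal{T}_i(\cdot,\theta_{i-1}^t)$, and $\mathcal{H}_i^t$ the resulting hidden-state manifold with intrinsic dimension $d_i^t$, so $d_i^0$ is the pretrained value. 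Applying Theorem~\ref{rank_theorem} at iterate $t$ gives the per-step inequality $d_i^t \le \text{rank}(\mathcal{T}_i^t)$; specializing to $t=0$ recovers $d_i^0 \le \text{rank}(\mathcal{T}_i^0)$, which is the base case and already pins the top of the trajectory.

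Second, I would argue that $t \mapsto d_i^t$ and $t \mapsto \text{rank}(\mathcal{T}_i^t)$ are (approximately) non-increasing. The justification reuses the discussion preceding the conjecture: fine-tuning narrows the task distribution; SGD is biased toward flat minima, so the Fisher spectrum flattens and effectively null directions proliferate; and the heavy over-parameterization of transformers makes it cheap for the optimizer to trade representational slack for a more compressed internal code. Concretely I would model the trajectory as loss-descending and treat compression of $\mathcal{H}_i$ as a monotone by-product, yielding $d_i^t \le d_i^0$ for all $t$. This is the step where ``argument'' is really ``modeling assumption'': a genuine proof would need a theorem converting empirical-loss descent into intrinsic-dimension descent, which is precisely what is missing and why the clean Theorem~\ref{rank_theorem} only upgrades to a conjecture here.

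Third — the crux — I would lower-bound $\text{rank}(\mathcal{T}_i^t)$ by the \emph{initial} value $d_i^0$ rather than merely by $d_i^t$. The handle is that $\text{rank}(\mathcal{T}_i^t) = \max_{x \in \mathcal{H}_{i-1}^t}\text{rank}\bigl(J(\mathcal{T}_i,x,\theta_{i-1}^t)\bigr)$ is lower semicontinuous in the block's own parameters: with the upstream representation held fixed, a small parameter perturbation can only keep the generic (maximal-over-$x$) Jacobian rank the same or raise it, never lower it on an open dense set. Hence a genuine drop $\text{rank}(\mathcal{T}_i^t)<\text{rank}(\mathcal{T}_i^0)$ cannot be caused by an infinitesimal move; it requires the optimizer to travel a finite parameter distance and to \emph{actively} drive some Jacobian direction to degeneracy. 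Because fine-tuning — and LoRA in particular — uses small-norm, low-rank updates, such degeneracy-forcing displacements accumulate slowly, and by the compression picture above they target only the directions the shrinking $d_i^t$ no longer needs. Taking these together, $\text{rank}(\mathcal{T}_i^t)$ descends from $\text{rank}(\mathcal{T}_i^0)$ toward, but not past, the floor $d_i^0$, giving $d_i^0 \le \text{rank}(\mathcal{T}_i^t)$; and since the gap $\text{rank}(\mathcal{T}_i^t) - d_i^0$ starts at $\text{rank}(\mathcal{T}_i^0) - d_i^0$ and is driven down by the same compression, it narrows — the ``progressively tighter'' claim. Via Corollary~\ref{corollary} this tightening translates into a decreasing lower bound on the per-block parameter budget, matching the intermediate-task-tuning observations reported in the paper.

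The main obstacle is clearly the third step: the semicontinuity remark explains why the rank cannot collapse \emph{instantly}, but bounding the \emph{rate} of its decrease by the rate of decrease of $d_i^t$ — which is what forces the floor to be exactly $d_i^0$ rather than something smaller — cannot be made rigorous without a quantitative link between optimizer dynamics, the low-rank structure of the updates, and the geometry of $\mathcal{H}_i$ (and additionally requires controlling the co-evolution of the upstream manifold $\mathcal{H}_{i-1}^t$, which I suppressed above). A secondary obstacle is that the monotonicity claims of the second step hold only in an averaged, empirical sense, with transient non-monotonicities early in training, so a careful statement would phrase the conjecture for a smoothed trajectory or in expectation over minibatch noise. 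These gaps are exactly why the result is stated as a conjecture supported by an intuitive argument rather than as a theorem.
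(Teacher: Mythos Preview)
Your heuristic argument is considerably more developed than the paper's own ``intuitive proof,'' which is a single paragraph asserting essentially your second step only: before fine-tuning both $d_i^0$ and $\text{rank}(\mathcal{T}_i^0)$ are high; during fine-tuning the model specializes, so the hidden states collapse onto a lower-dimensional subspace and the block's transformation focuses on fewer independent directions; hence both quantities decrease and the inequality becomes tighter. The paper does not explicitly invoke Theorem~\ref{rank_theorem} per iterate (your step~1), and --- more importantly --- it offers no mechanism whatsoever for why $\text{rank}(\mathcal{T}_i^t)$ should remain \emph{above} the fixed floor $d_i^0$ as it descends, which is exactly the gap you isolated as the crux and tried to address via lower semicontinuity of the Jacobian rank and the small-norm nature of LoRA updates (your step~3). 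So your proposal is a strict elaboration: it shares the paper's compression narrative but adds scaffolding the paper omits, and it is transparent about where rigor breaks down, whereas the paper's version simply moves from ``both decrease'' to the conclusion. Both are acceptable as motivation for a conjecture; yours gives a reader a clearer picture of what a genuine proof would have to supply.
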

\begin{proof}
    Here, we outline an intuitive proof of our conjecture. Before fine-tuning, the hidden states explore a large, unconstrained space, leading to a high intrinsic dimension $d_i^0$ of the manifold $ \mathcal{N}_i $ and a relatively high rank for the transformer block $ \mathcal{T}_i^0 $. During fine-tuning, the model becomes specialized for a specific task. It learns to focus on relevant features, causing the hidden states to lie on a lower-dimensional subspace, which reduces the intrinsic dimension $d_i$. Simultaneously, the rank of $ \mathcal{T}_i $ decreases as the block's transformation focuses on fewer independent directions, filtering out irrelevant information. As both the intrinsic dimension and rank decrease during fine-tuning, the inequality $ d_i^0 \leq \text{rank}(\mathcal{T}_i) $ becomes tighter. 

    This completes the proof.
\end{proof}

\section{Datasets Statistics}
\subsection{GLUE Benchmark}
We present the statistics for the GLUE \citep{wang2019gluemultitaskbenchmarkanalysis} datasets used in our experiments in Table \ref{tab:glue}.

\begin{table}[H]
    \centering
    \caption{Summary of the GLUE benchmark datasets.}
    \label{tab:glue}
    \begin{tabular}{l|l|r|r|r|c|l}
        \toprule
        \bf Corpus & \bf Task & \bf \#Train & \bf \#Dev & \bf \#Test & \bf \#Label & \bf Metrics \\ 
        \midrule
        CoLA   & Acceptability & 8.5k  & 1k    & 1k    & 2  & Matthews Corr.  \\ \midrule
        SST-2  & Sentiment     & 67k   & 872   & 1.8k  & 2  & Accuracy        \\ \midrule
        RTE    & NLI           & 2.5k  & 276   & 3k    & 2  & Accuracy        \\ \midrule
        MRPC   & Paraphrase    & 3.7k  & 408   & 1.7k  & 2  & Accuracy     \\ \midrule
        QNLI   & QA/NLI        & 108k  & 5.7k  & 5.7k  & 2  & Accuracy        \\ \midrule
        STS-B  & Similarity    & 7k    & 1.5k  & 1.4k  & --  & Pearson/Spearman Corr. \\ 
        \bottomrule
    \end{tabular}
\end{table}
\subsection{SQUAD Datasets}
We present the statistics for the SQUAD \citep{rajpurkar2016squad100000questionsmachine} datasets used in our experiments in Table \ref{tab:stat_squad}.

\begin{table}[H]
    \centering
    \vspace{2mm}
    \caption{Statistics of the SQuAD dataset.}
    \label{tab:stat_squad}
    \begin{tabular}{l|cc}
    \toprule
    & \# Train & \# Validation \\
    \midrule
    SQuAD v1.1 & 87,599 & 10,570 \\
    SQuAD v2.0 & 130,319 & 11,873 \\
    \bottomrule
    \end{tabular}
\end{table}

\subsection{Airoboros Dataset}
We present the statistics for the Airoboros \citep{airoboros2024} dataset used in our experiments in Table \ref{tab:airobors}.

\begin{table}[H]
    \centering
    \vspace{2mm}
    \caption{Statistics of the Airoboros dataset.}
    \label{tab:airobors}
    \begin{tabular}{l|cc}
    \toprule
    & \# Train \\
    \midrule
    Airobors & 29,400  \\
    \bottomrule
    \end{tabular}
\end{table}

\subsection{MT-BENCH Benchmark}
We present the statistics for the MT-BENCH \citep{zheng2023judgingllmasajudgemtbenchchatbot} dataset used in our experiments in Table \ref{tab:MT-Bench}.

\begin{table}[H]
    \centering
    \vspace{2mm}
    \caption{Statistics of the MT-BENCH dataset.}
    \label{tab:MT-Bench}
    \begin{tabular}{l|c}
    \toprule
      & \# Samples \\
    \midrule
    MT-BENCH & 80\\
    \bottomrule
    \end{tabular}
\end{table}
\section{Examples of rank patterns}

\begin{figure}[H]
    \centering
\includegraphics[width=0.8\linewidth]{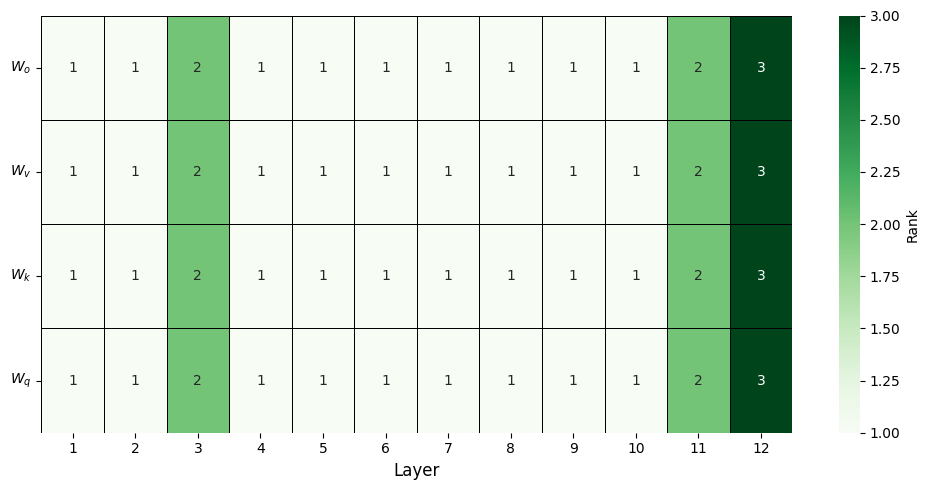}
    \caption{GeLoRA rank pattern for CoLA}
    \label{fig:1}
\end{figure}

\begin{figure}[H]
    \centering
\includegraphics[width=0.8\linewidth]{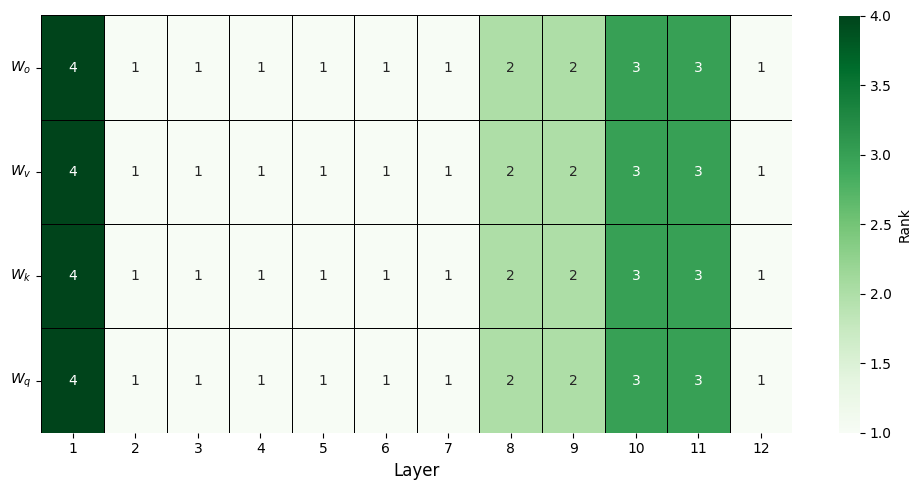}
    \caption{GeLoRA rank pattern for MRPC}
    \label{fig:2}
\end{figure}

\begin{figure}[H]
    \centering
\includegraphics[width=0.8\linewidth]{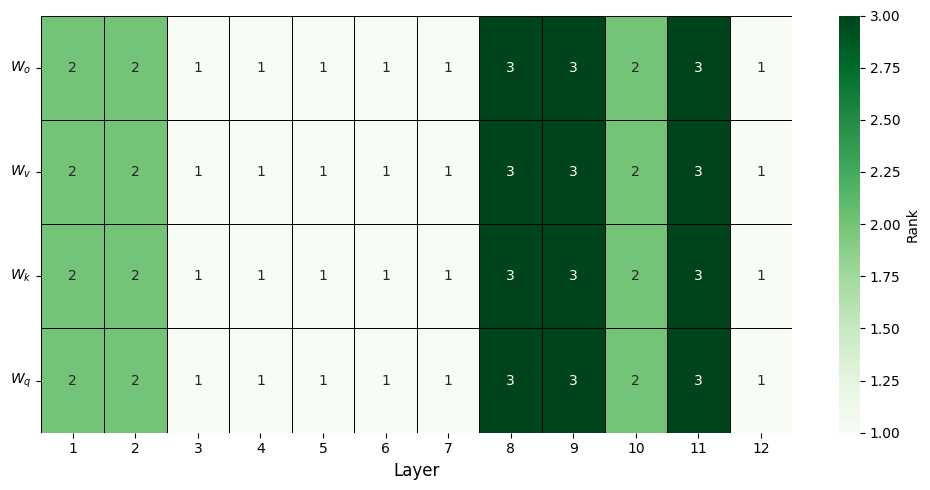}
    \caption{GeLoRA rank pattern for RTE}
    \label{fig:3}
\end{figure}

\begin{figure}[H]
    \centering
\includegraphics[width=0.8\linewidth]{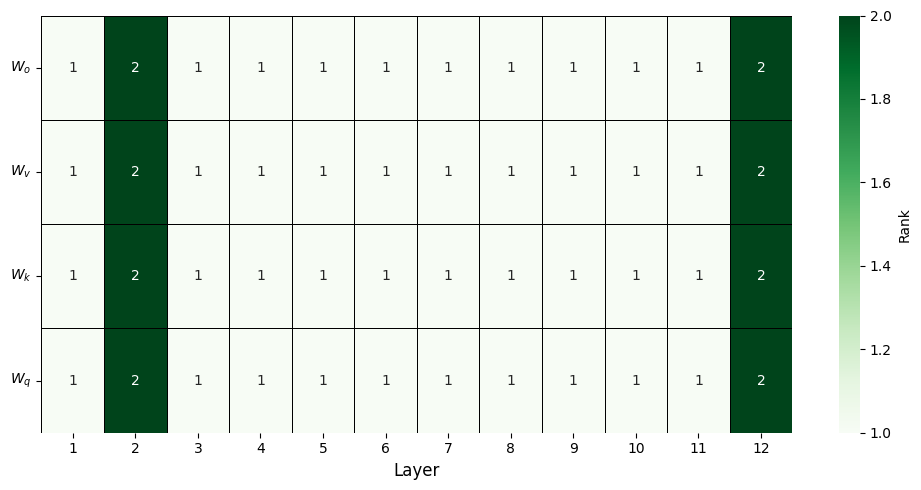}
    \caption{GeLoRA rank pattern for SST-2}
    \label{fig:4}
\end{figure}

\begin{figure}[H]
    \centering
\includegraphics[width=0.8\linewidth]{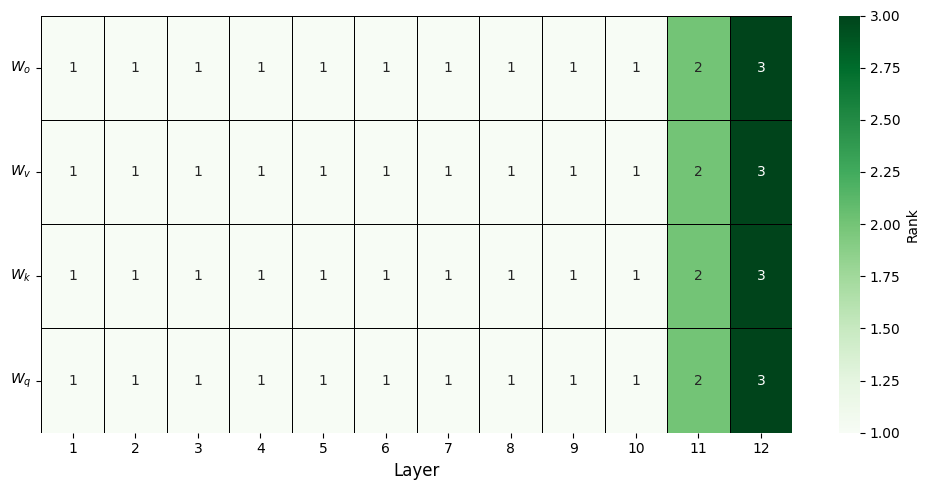}
    \caption{GeLoRA rank pattern for QNLI}
    \label{fig:5}
\end{figure}

\begin{figure}[H]
    \centering
\includegraphics[width=0.8\linewidth]{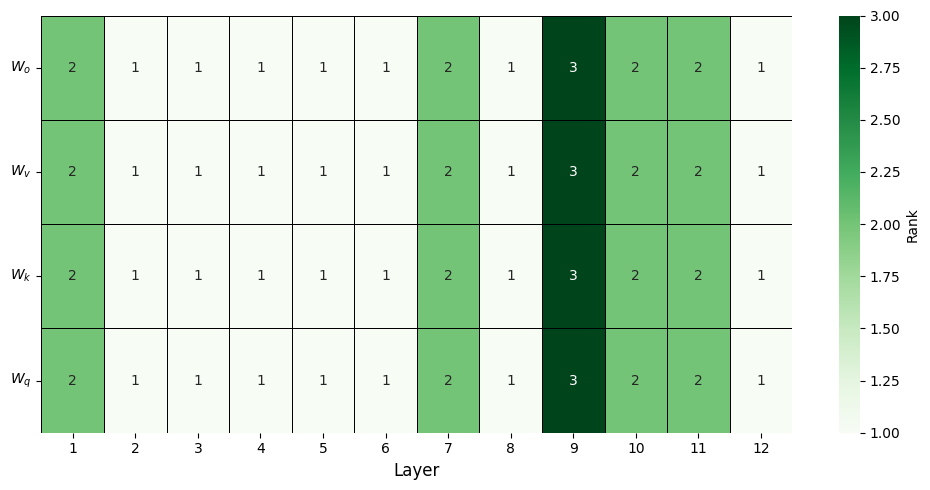}
    \caption{GeLoRA rank pattern for STSB}
    \label{fig:6}
\end{figure}

\section{Training Details}
We employ $\textsc{Optuna}$ to fine-tune the hyperparameters for the following techniques: LoRA, GeLoRA, BitFit, and Full Finetuning, while using the optimal parameters for SoRA from the original paper. The ranges for hyperparameters include a learning rate between $8e^{-5}$ and $1e^{-3}$, LoRA dropout, warmup ratio, and weight decay between $0$ and $0.1$, as well as two types of schedulers: linear and cosine. 

Hereafter, we summarize the optimal parameters identified across 50 trials, which were used in the fine-tuning process.
\label{appendix: training_details}
\begin{table}[H]
\caption{Hyperparameters for GeLoRA for each task}
\centering
\begin{tabular}{l|rrrrrr}
\toprule
\textbf{Hyperparameter} &  \textbf{CoLA} & \textbf{STS-B}  & \textbf{MRPC} & \textbf{QNLI} & \textbf{SST-2} & \textbf{RTE} \\
\midrule
\textbf{Learning Rate} & $8.00e^{-5}$  &  $1.69e^{-4}$ & $7.53e^{-4}$  &  $1.88e^{-4}$  & $1.61e^{-4}$  & $1.51e^{-4}$ \\

\textbf{Weight Decay} & $1.00e^{-1}$  &  $9.43e^{-2}$ & $5.48e^{-2}$ &  $3.00e^{-2}$  &  $3.22e^{-2}$ & $6.78e^{-2}$ \\
\textbf{Warmup Ratio} & $6.00e^{-2}$  &  $1.65e^{-2}$ & $3.04e^{-2}$ &  $5.91e^{-2}$  &  $7.63e^{-2}$ & $6.35e^{-2}$ \\
\textbf{LoRA Dropout} & $5.00e^{-2}$  &  $5.69e^{-2}$ & $1.88e^{-2}$ &  $5.36e^{-2}$  & $4.68e^{-2}$  & $7.16e^{-2}$ \\
\textbf{Scheduler Type} & Linear  &   Cosine  & Linear &  Linear  & Cosine  & Cosine \\
\bottomrule
\end{tabular}
\end{table}

\begin{table}[H]
\caption{Hyperparameters for LoRA for each task}
\centering
\begin{tabular}{l|rrrrrr}
\toprule
\textbf{Hyperparameter} &  \textbf{CoLA} & \textbf{STS-B}  & \textbf{MRPC} & \textbf{QNLI} & \textbf{SST-2} & \textbf{RTE} \\
\midrule
\textbf{Learning Rate} & $3.88e^{-4}$  &  $9.80e^{-5}$ & $4.14e^{-4}$  &  $2.12e^{-4}$  & $1.27e^{-4}$  & $3e^{-4}$ \\

\textbf{Weight Decay} & $4.88e^{-2}$  &  $3.30e^{-2}$ & $8.94e^{-2}$ &  $3.03e^{-4}$  &  $3.90e^{-2}$ & $2.96e^{-2}$ \\
\textbf{Warmup Ratio} & $9.63e^{-2}$  &  $3.99e^{-2}$ & $6.28e^{-2}$ &  $7.89e^{-2}$  &  $8.33e^{-2}$ & $4.9e^{-2}$ \\
\textbf{LoRA Dropout} & $9.85e^{-2}$  &  $1.00e^{-1}$ & $5.51e^{-2}$ &  $7.19e^{-2}$  & $8.09e^{-3}$  & $5.13e^{-2}$ \\
\textbf{Scheduler Type} & Cosine  &   Linear  & Linear &  Linear  & Linear  & Cosine \\
\bottomrule
\end{tabular}
\end{table}

\begin{table}[H]
\caption{Hyperparameters for Full Finetuning for each task}
\centering
\begin{tabular}{l|rrrrrr}
\toprule
\textbf{Hyperparameter} &  \textbf{CoLA} & \textbf{STS-B}  & \textbf{MRPC} & \textbf{QNLI} & \textbf{SST-2} & \textbf{RTE} \\
\midrule
\textbf{Learning Rate} & $1.12e^{-4}$  &  $1.03e^{-4}$ & $6.87e^{-4}$  &  $1.03e^{-4}$  & $1.27e^{-4}$  & $9.29e^{-5}$ \\

\textbf{Weight Decay} & $5.53e^{-2}$  &  $3.21e^{-2}$ & $7.48e^{-2}$ &  $5.63e^{-3}$  &  $3.90e^{-2}$ & $6.35e^{-2}$ \\
\textbf{Warmup Ratio} & $2.34e^{-2}$  &  $9.30e^{-2}$ & $7.44e^{-2}$ &  $4.76e^{-2}$  &  $8.33e^{-2}$ & $3.33e^{-2}$ \\
\textbf{Scheduler Type} & Cosine  &   Cosine  & Cosine &  Cosine  & Linear  & Cosine \\
\bottomrule
\end{tabular}
\end{table}

\begin{table}[H]
\caption{Hyperparameters for BitFit for each task}
\centering
\begin{tabular}{l|rrrrrr}
\toprule
\textbf{Hyperparameter} &  \textbf{CoLA} & \textbf{STS-B}  & \textbf{MRPC} & \textbf{QNLI} & \textbf{SST-2} & \textbf{RTE} \\
\midrule
\textbf{Learning Rate} & $7.94e^{-4}$  &  $5.53e^{-4}$ & $8.61e^{-4}$  &  $7.91e^{-4}$  & $3.36e^{-4}$  & $1.00e^{-3}$ \\

\textbf{Weight Decay} & $2.00e^{-2}$  &  $8.89e^{-2}$ & $9.89e^{-2}$ &  $4.70e^{-3}$  &  $3.16e^{-2}$ & $1.11e^{-2}$ \\
\textbf{Warmup Ratio} & $1.00e^{-1}$  &  $2.75e^{-2}$ & $8.10e^{-2}$ &  $7.07e^{-2}$  &  $8.33e^{-2}$ & $6.19e^{-2}$ \\
\textbf{Scheduler Type} & Cosine  &   Linear  & Cosine &  Linear  & Cosine  & Linear \\
\bottomrule
\end{tabular}
\end{table}

\end{document}